\newcommand \ie {\emph{i.e.}\xspace}
\newtheorem{theorem}{Theorem}
\newtheorem{lemma}[theorem]{Lemma}
\newtheorem{proposition}[theorem]{Proposition}
\newtheorem{remark}{Remark}[section]
\newcommand\hyp[1]{\item[\textbf{(#1)}]
  \edef\@currentlabel{(#1)}}
\newcommand\varitem[1]{\item[\textbf{(A\arabic{enumi}{$#1$})}]
 \edef\@currentlabel{(A\arabic{enumi}{$#1$})}}
\newcommand{\R}{\mathbb{R}\xspace}
\newcommand{\dd}{\, \mathrm d}
\renewcommand \H {\mathcal{H}}
\newcommand \F {\mathcal{F}}
\newcommand \llb {\llbracket}
\newcommand \rrb {\rrbracket}
\newcommand \ind {\mathds{1}} 
\newcommand \indic[1] {\mathbf{1}_{#1}} 
\newcommand{\iH}[2]{\left\langle #1, #2 \right\rangle_{\H}}
\newcommand \e {\operatorname{e}}
\newcommand{\erf}{\operatorname{erf}}
\newcommand{\prob}[1]{\mathbb{P} \left( #1 \right)}
\newcommand{\given}{\mid}
\newcommand{\argmin}{\operatorname{arg \, min}}
\newcommand \minimize[1] {\underset{#1}{\operatorname{minimize}}~}
\newcommand \maximize[1] {\underset{#1}{\operatorname{maximize}}~}
\newcommand \st {\operatorname{s.t. }}
\newenvironment{disarray}
 {\everymath{\displaystyle\everymath{}}\array}
 {\endarray}
\newenvironment{opb*} 
{
	\[
		\begin{disarray}{c@{\hspace*{0.05cm}}l}
}
{
		\end{disarray}
	\]\par
}
\newcounter{opb}
\renewcommand*{\theopb}{(P\arabic{opb})}
\crefname{opb}{Problem}{Problems}
\newenvironment{opb}[1]
{
	\stepcounter{opb}
	\equation
	  \label[opb]{#1}
		\begin{disarray}{c@{\hspace*{0.05cm}}l}
}
{
		\end{disarray}
		\tag*{\theopb}
	\endequation
}
\newenvironment{starray} 
{
	\left\{ \begin{disarray}{l}
}
{
	\end{disarray} \right.
}
\newenvironment{starrayd} 
{
	\left\{ \begin{disarray}{l@{\quad}l}
}
{
	\end{disarray} \right.
}
\newcommand{\ms}[1]{\todo[linecolor=Aquamarine, bordercolor=Aquamarine, backgroundcolor=white, size=\footnotesize]{#1}} 
\newcommand{\myacronym}[4] 
{
	\newglossaryentry{#1}
	{
		type=\acronymtype,
		name={#2},
		description={#3},
		text={#2},
		first={#3 (#2)},
		firstplural={#4 (#2s)},
		short={#2}
	}
	\expandafter\newcommand\csname #1\endcsname{\gls{#1}\xspace} 
	\expandafter\newcommand\csname #1s\endcsname{\glspl{#1}\xspace} 
}
\renewcommand{\ms}[1]{}
\title{
  Nonparametric estimation of Hawkes processes with RKHSs
  }
\author[1]{Anna Bonnet}
\author[1]{Maxime Sangnier}
\affil[1]{Sorbonne Université and Université Paris Cité, CNRS, Laboratoire de Probabilités, Statistique et Modélisation, F-75005 Paris, France}
\begin{document}
	\maketitle

	\begin{abstract}

This paper addresses nonparametric estimation of nonlinear multivariate Hawkes processes,
where the interaction functions are assumed to lie in a reproducing kernel Hilbert space (RKHS).
Motivated by applications in neuroscience, the model allows complex interaction functions,
in order to express exciting and inhibiting effects, but also a combination of both (which
is particularly interesting to model the refractory period of neurons), and considers in return
that conditional intensities are rectified by the ReLU function.
The latter feature incurs several methodological challenges, for which workarounds are proposed in this paper.
In particular, it is shown that a representer theorem can be obtained for approximated versions of the log-likelihood and the least-squares criteria.
Based on it, we propose an estimation method, that relies on two
common
approximations
(of the ReLU function and of the integral operator).
We provide a bound
that controls the impact of these approximations.
Numerical results on synthetic data confirm this fact as well as the good asymptotic behavior of the proposed estimator.
It also shows that our method achieves a better performance compared to related nonparametric estimation techniques
and suits neuronal applications.

Keywords: Nonlinear Hawkes process, nonparametric estimation, kernel method.

	\end{abstract}

	\section{Introduction}
Hawkes processes are a class of past-dependent point processes, widely used in many applications such as seismology \citep{ogata_statistical_1988}, criminology \citep{olinde_selflimiting_2020} and neuroscience \citep{reynaud-bouret_inference_2013} for their ability to capture complex dependence structures. In their multidimensional version \citep{ogata_statistical_1988}, Hawkes processes can model pairwise interactions between different types of events, allowing to recover a connectivity graph between different features.  Originally developed by \cite{hawkes_spectra_1971} in order to model self-exciting phenomena, where each event increases the probability of a new event occurring, many extensions have been proposed ever since. In particular, nonlinear Hawkes processes have been introduced notably to detect inhibiting interactions, when an event can decrease the probability of another one appearing.
Hawkes processes with inhibition are notoriously more complicated to handle due to the loss of many properties of linear Hawkes processes such as the cluster representation and the branching structure of the process \citep{hawkes_cluster_1974}.

Since the first article on nonlinear Hawkes processes \citep{bremaud_stability_1996} proving in particular their existence, many works have focused on inhibition in the past few years. Among them, limit theorems have been established in \citep{costa_renewal_2020} while \cite{duval_interacting_2022} obtained mean-field results on the behaviour of two neuronal populations. Regarding statistical inference, in the frequentist setting we can mention the exact maximum likelihood procedure of \cite{bonnet_inference_2023}, the least-squares approach by \cite{bacry_sparse_2020} and the nonparametric approach based on
Bernstein-type polynomials
by \cite{lemonnier_nonparametric_2014}. While the first one proposes an exact inference procedure, it is restricted to exponential kernels. The other two methods do not make such an assumption but they consider a strong approximation which requires the intensity function to remain almost always positive: this can be inacurrate when the inhibiting effects are strong, providing estimation errors in such settings, as empirically shown in \citep{bonnet_inference_2023}.
In the Bayesian framework, \cite{sulem_bayesian_2024} proposed a nonparametric estimation procedure for kernel functions with
bounded support. The authors then developed a variational inference procedure \citep{sulem_scalable_2023} in order to reduce the computational cost of their method. Finally, \cite{deutsch_estimating_2022} investigated a parametric inference method based on a new reparametrisation of the process.

Motivated by applications in neuroscience, we aim at developing a flexible model to capture complex neuronal interactions from spike train data. The current knowledge suggests several specificities regarding neuronal interactions that require appropriate modeling. Firstly, several works imply indeed that neuronal interactions can describe both inhibiting and exciting effects \citep{berg_synaptic_2013,bonnet_inference_2023}.
Moreover, neurons also exhibit a refractory period \citep{lovelace_chapter_1994}, that is a recovery period following a spike during which the neuron cannot emit another spike.
In the end, it is likely that a neuron has a short-term self-inhibiting effect due to its refractory period combined with a long-term self-exciting behaviour,
as illustrated by the synthetic interaction functions (or triggering kernels) in \cref{img:intro} (top left, blue curve).
This kind of interaction is very challenging to model with classical kernel functions, such as exponential ones \citep{bonnet_inference_2023}.

In order to account for the specificities of neuronal activity, we propose a nonparametric estimation method to model and estimate complex interaction functions, that can in particular change signs along time (see the green curve in \cref{img:intro}).
The flexibility of the approach relies on the mild assumption that the interaction functions belong to a \rkhs.
Our method is supported by theoretical guarantees, including representer theorems and approximation bounds.
An
empirical study highlights the performance of our approach compared to alternative ones
including the exponential model \citep{bonnet_inference_2023} and Bernstein-type polynomial approximation \citep{lemonnier_nonparametric_2014}.

\begin{figure}[ht]
  \center
  \includegraphics[width=.5\textwidth]{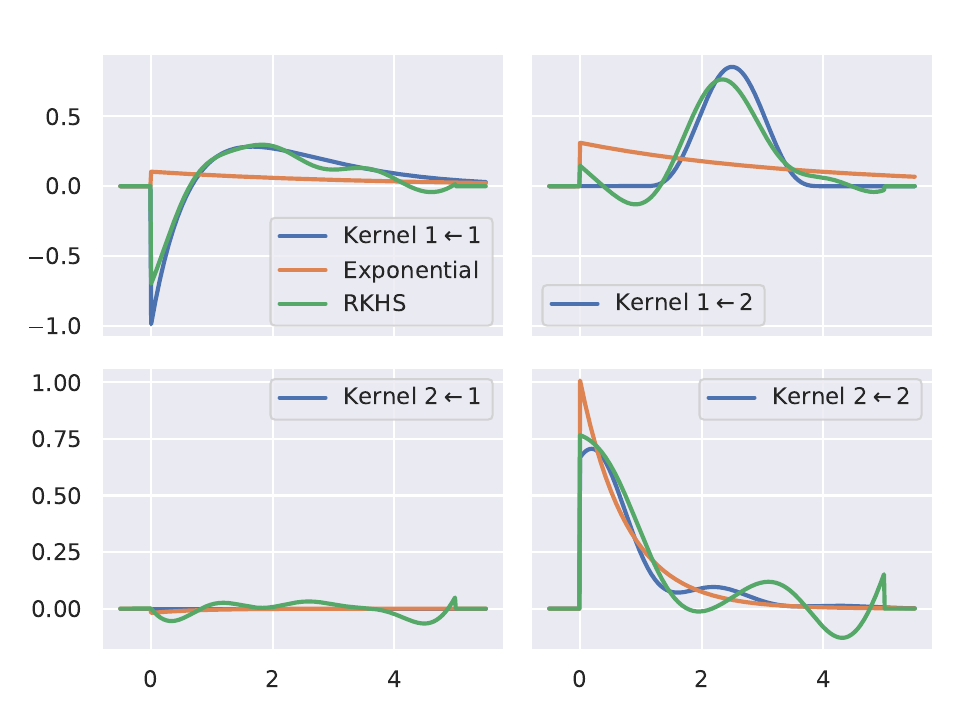}
  \caption{Example of estimation of the triggering kernels (blue) of a Hawkes process with the exponential model (orange) and the proposed method (green).}
  \label{img:intro}
\end{figure}

	\section{Related work}
Nonparametric estimation of point processes, and particularly of Hawkes processes, has been largely studied in the literature, and is still an active field of research.
The earliest focus has been made on the Bayesian inference for Poisson processes.
This one is often based on the Gaussian-Cox model, which is an inhomogeneous Poisson process with a stochastic intensity function modulated by a Gaussian process (then rectified by a link function in order to guarantee non-negativity of the intensity).
Examples include \citep{moller_log_1998,cunningham_fast_2008,mohler_selfexciting_2011,linderman_scalable_2015} (which both require domain gridding),
\citep{linderman_discovering_2014} for a combination of exciting point processes with random graph models,
and \citep{adams_tractable_2009,samo_scalable_2015} for inference methods based on tractable Markov Chain Monte Carlo algorithms.
Concurrently, a variational inference schemes were proposed \citep{lloyd_variational_2015,lian_multitask_2015}, which scale better than the method introduced by \citet{samo_scalable_2015}.

Efficient nonparametric Bayesian estimation of linear Hawkes processes is well exemplified in the work by \citet{zhang_efficient_2019}, which is based on sampling random branching structures with a Gibbs procedure.
Since variational inference enjoys faster convergence than Gibbs sampling, the method proposed by
\citet{lloyd_variational_2015} has then been extended to inference of Hawkes processes \citep{zhang_variational_2020}, where interactions functions are square transformations of Gaussian processes, then to model a nonparametric baseline intensity in addition to the triggering kernels \citep{zhou_efficient_2021},
while still enjoying a scalable feature to suit large real world data.
Other works modeling both the baseline intensity and the triggering kernels as transformations of Gaussian processes exist, for instance related to mean-field variational algorithms \citep{zhou_scalable_2019,zhou_efficient_2020}.

A different line of works regarding Bayesian inference of Hawkes processes is modeling interaction kernels as piecewise constant functions and setting priors on the function levels \citep{donnet_nonparametric_2020,browning_flexible_2022,sulem_scalable_2023,sulem_bayesian_2024}.
The algorithmic part relies on a Markov chain Monte Carlo estimator of the posterior distribution, close to \citep{zhang_efficient_2019}.

Nonparametric frequentist estimation methods are not left out:
a famous approach relies on relating the interaction functions with the second order statistics of its counting process, leading to an estimation method based on the solution to a Wiener-Hopf equation \citep{bacry_nonparametric_2012,bacry_first_2016}.
Neural networks were also used to estimate general point processes \citep{shchur_intensityfree_2019} (see references therein) and Hawkes processes \citep{pan_selfadaptable_2021}, requiring a sufficient amount of data and of computational resources.
Models are based on \lstm recurrent neural networks \citep{du_recurrent_2016,mei_neural_2017},
independent neural networks for each triggering kernel \citep{joseph_shallow_2022,joseph_neural_2024,joseph_nonparametric_2024}
(which is more expressive than \lstms since it allows to recover the components of the Hawkes process and in particular its Granger causality graph),
and transformers \citep{zuo_transformer_2020,meng_interpretable_2024,wu_learning_2024}.

Nonparametric inference methods also include basis decomposition or linear combination of pre-defined functions such as
piece-wise constant functions \citep{marsan_extending_2008,reynaud-bouret_adaptive_2010,hansen_lasso_2015,eichler_graphical_2017},
exponential functions coming from Bernstein-type polynomials \citep{lemonnier_nonparametric_2014},
smoothing splines \citet{lewis_nonparametric_2011,zhou_learning_2013} (estimated by solving Euler-Lagrange equations),
Gaussian functions \citep{xu_learning_2016},
and cosine series \citep{chen_nonparametric_2016}.
The bulk of these methods are designed for linear Hawkes processes and rely either on the expectation-maximization technique introduced by \citet{lewis_nonparametric_2011} or on standard numerical optimization algorithms.
Some other approaches are noticeable, such as those related to kernel estimators \citep{chen_nonparametric_2016} and autoregressive processes \citep{kirchner_estimation_2017,kirchner_nonparametric_2018,kurisu_discretization_2017}.

Despite the vast literature regarding nonparametric estimation of point processes, only a few works addressed \rkhss.
\citet{flaxman_poisson_2017,flaxman_poisson_2017a} proposed an inference method for inhomogeneous Poisson processes,
while online estimation of linear Hawkes processes with time discretization is addressed in \citep{yang_online_2017}.
It is important to note that online estimation is not aimed at minimizing the estimation criterion (for instance the negative log-likelihood or the least-squares contrast) but rather at minimizing its regret.
Thus, this approach is, from an optimization point of view, suboptimal when the data is available offline.

Up to our knowledge, no work tackles inference of triggering kernels from an \rkhs in a batch setting (while in many applications, in particular in neuroscience, the data is available offline rather than online) and even less in inhibition setting.
In addition, nonparametric frequentist methods able to handle inhibition are either based on Bernstein-type polynomials \citep{lemonnier_nonparametric_2014} or on neural networks \citep{mei_neural_2017,meng_interpretable_2024,joseph_neural_2024}.
As a result, our contribution lies in proposing an offline learning procedure of nonlinear Hawkes processes, for which triggering kernels come from an \rkhs.
The method rests on representer theorems and approximation bounds, proved in \cref{app:proof1,app:proof2,app:proof3,app:approximation}.  
The proposed estimation method is implemented in Python and will be freely available on GitHub.\footnote{\url{https://github.com/msangnier/kernelhawkes}}


	\section{Inference of Hawkes processes}
Let \(N = (N^{(1)}, \dots, N^{(d)})\) be a multivariate point process on \(\R_+\),
defined by its conditional intensities \(\lambda^{(1)}, \dots, \lambda^{(d)}\):
\[
  \forall j \in \llb 1, d \rrb,
  \forall t \in \R_+:
  \quad
  \lambda^{(j)}(t) = \lim_{\Delta t \to 0} \frac{\prob{N^{(j)}([t, t+\Delta t)) = 1 \given \F_t}}{\Delta t},
\]
where \(\F_t = \sigma \left( N^{(j)} \left([0, s) \right), s \le t, j \in \llb 1, d \rrb \right)\) is the internal history of \(N\) \citep{bremaud_stability_1996}.
It is assumed that \(N\) is a nonlinear Hawkes process characterized by:
\[
  \forall j \in \llb 1, d \rrb,
  \forall t \in \R_+:
  \quad
  \lambda^{(j)}(t) = \varphi \left( \mu_j + \sum_{\ell=1}^d \int_{[0, t)} g_{j \ell}(t - s) \, N_\ell(\mathrm d s) \right),
\]
where \(\varphi : \R \to \R_+\) is a non-negative link function,
\(\mu_1, \dots, \mu_d > 0\) are baseline intensities,
and \(g_{1 1}, g_{1 2} \dots, g_{d d}\) are interaction functions from \(\R\) to \(\R\).

This work is aimed at designing an inference method for \(N\) based on \rkhss.
Concretely, let \(\H\) be a prescribed \rkhs of reproducing kernel \(k : \R^2 \to \R\) \citep{berlinet_reproducing_2004}
and \(A > 0\) some arbitrary bound.
From now on, it is assumed that for all \((j, \ell) \in \llb 1, d \rrb^2\), \(g_{j \ell} = \left( h_{j \ell} + b_{j \ell} \right) \indic{[0, A]}\) (with \(\indic{[0, A]}\) being the indicator function of \([0, A]\)),
where \(h_{j \ell} \in \H\) is the functional part of the interaction function \(g_{j \ell}\)
and \(b_{j \ell} \in \R\) its offset.
As commonly done (see for instance \citep{staerman_fadin_2023}), it is assumed that events cannot have an influence far in the future, which translates here by the interaction function \(g_{j \ell}\) having a bounded support.
With a slight abuse of notation, we now note \(\theta = \left( (\mu_j)_{1 \le j \le d}, (h_{j \ell})_{1 \le j, \ell \le d}, (b_{j \ell})_{1 \le j, \ell \le d} \right)\) the parameter to be estimated and
\(\Theta^+ = \R_+^d \times \H^{d^2} \times \R^{d^2}\)
its corresponding set
(we allow baseline to cancel for optimization purposes).

Now, let, for all \(j \in \llb 1, d \rrb\), \(\left( T_n^{(j)} \right)_{n \ge 1}\) be
a sorted realization of \(N^{(j)}\) in \(\R_+\),
and \(N^{(j)}_t = N^{(j)} \left([0, t)\right)\) the number of these times in the interval \([0, t)\) (for any \(t \ge 0\)).
With this notation, conditional intensities
read:
\[
  \forall j \in \llb 1, d \rrb,
  \forall t \in \R_+:
  \quad
  \lambda^{(j)}(t) = \varphi \left( \mu_j + \sum_{\ell=1}^d \sum_{i = 1}^{N^{(\ell)}_t} g_{j \ell} \left(t - T_i^{(\ell)} \right) \right),
\]
which will be left-continuous on event times here.
Assuming that times are observed in the interval \([0, T]\) (\(T>0\) is a fixed horizon),
there exist two common methods for estimating a Hawkes process.
The first one is by minimizing the negative log-likelihood \citep{ozaki_maximum_1979,daley_introduction_2003}:
\[
  \forall \theta \in \Theta^+,
  \quad
  L(\theta) = \sum_{j=1}^d \left[ \int_0^T \lambda^{(j)}(t) \dd t - \sum_{n=1}^{N^{(j)}_T} \log \left( \lambda^{(j)} \left( T_n^{(j)} \right) \right) \right],
\]
and the second is by minimizing an approximated least-squares contrast \citep{reynaud-bouret_goodnessoffit_2014,bacry_sparse_2020}:
\[
  \forall \theta \in \Theta^+,
  \quad
  J(\theta) = \sum_{j=1}^d \left[ \int_0^T \lambda^{(j)}(t)^2 \dd t - 2 \sum_{n=1}^{N^{(j)}_T} \lambda^{(j)} \left( T_n^{(j)}  \right) \right].
\]

Throughout this paper
and as it is common for kernelized estimation (in order to prevent overfitting the training data), a quadratic penalization is added to the contrast:
let \(\eta > 0\) be some regularization parameter,
the regularized estimation problem addressed in this paper being
\begin{opb}{opb:contrast}
  \minimize{\theta \in \Theta^+}
  & C(\theta)
  + \frac{\eta}{2} \sum_{1 \le j, \ell \le d} \|h_{j \ell}\|_\H^2,
\end{opb}
where \(C\) is either \(L\) or \(J\).

As highlighted in the forthcoming sections,
while the previous optimization problem will appear to be convex,
two pitfalls prevent the easy derivation of kernelized estimators of Hawkes processes.
The first one is the integral operator in \(L\) and \(J\),
and the second one is the nonlinear link function \(\varphi\).
The next sections discuss these problems and present workarounds.


	  \subsection{Linear Hawkes processes}
The linear Hawkes process is the original model of past-dependent processes \citep{hawkes_spectra_1971},
where interaction functions are supposed to have non-negative values: for all \((j, \ell) \in \llb 1, d \rrb^2\), \(g_{j \ell} : \R \to \R_+\),
which makes it possible to get rid of the link function \(\varphi\).
Then, conditional intensities are:
\[
  \forall j \in \llb 1, d \rrb,
  \forall t \in \R_+:
  \quad
  \lambda^{(j)}(t) = \mu_j + \sum_{\ell=1}^d \sum_{i = 1}^{N^{(\ell)}_t} \left( h_{j \ell} \left(t - T_i^{(\ell)} \right) + b_{j \ell} \right) \indic{[0, A]} \left(t - T_i^{(\ell)} \right).
\]
From a numerical point of view, it seems easier to estimate linear than nonlinear Hawkes processes because of losing the nonlinear function \(\varphi\).
This is partially true regarding the first difficulty (integrating conditional intensities) but comes with the price of non-negativity constraints on \(g_{j \ell}\), for all \((j, \ell) \in \llb 1, d \rrb^2\).
As a result, implementation of maximum likelihood and least-squares estimators is not trivial and only approximate derivations can be produced.
The first example is presented now, the second is a byproduct of \cref{prop:representer_thm_mle} (see the discussion below \cref{prop:representer_thm_mle}).

The first example consists in considering, for all \(j \in \llb 1, d \rrb\),
a Riemann approximation of the integral term:
\(\int_0^T \lambda^{(j)}(t)^2 \dd t \approx \frac{T}{M} \sum_{n=1}^M \lambda^{(j)}(\tau_n)^2\),
with \(\tau_n = \frac{n-1}{M} T\) (\(M\) being an integer greater than two),
and in discretizing, for all \(\ell \in \llb 1, d \rrb\), the non-negativity constraint
\(g_{j \ell} \ge 0\) to \(h_{j \ell}(x_n) + b_{j \ell} \ge 0\) for all \(x_n = \frac{n-1}{P-1} A\) (\(P\) being an integer greater than two).

\begin{proposition}[Representer theorem for discretized least-squares estimation of linear Hawkes processes]
  \label{prop:duality}
  Let \(\varphi\) be the identity function
  and
  \[
    \forall \theta \in \Theta^+,
    \quad
    J_M(\theta) = \sum_{j=1}^d \left[ \frac{T}{M} \sum_{n=1}^M \lambda^{(j)}(\tau_n)^2 - 2 \sum_{n=1}^{N^{(j)}_T} \lambda^{(j)} \left( T_n^{(j)}  \right) \right].
  \]
  Then, denoting \(\Theta = \R^d \times \H^{d^2} \times \R^{d^2}\),
  if the optimization problem
  \begin{opb*}
    \minimize{\theta \in \Theta}
    & J_M(\theta)
    + \frac{\eta}{2} \sum_{1 \le j, \ell \le d} \|h_{j \ell}\|_\H^2 \\
    \st & \begin{starray}
      \forall j \in \llb 1, d \rrb, \mu_j \ge 0\\
      \forall (j, \ell) \in \llb 1, d \rrb^2, \forall n \in \llb 1, P \rrb, h_{j \ell}(x_n) + b_{j \ell} \ge 0,
    \end{starray}
  \end{opb*}
  has a solution \(\theta\), it is of the form:
  \begin{align*}
    \forall (j, \ell) \in \llb 1, d \rrb^2,
    \quad
    h_{j \ell}
    &= \eta^{-1} \left[
      2 q_{j \ell}
      + \sum_{n=1}^P {\beta_n^{(j \ell)}} k \left( \cdot, x_n \right)
      - \sum_{n=1}^M {\alpha_n^{(j)}} r_{\ell n}
    \right],
  \end{align*}
  where \(({\alpha^{(j)}})_{1 \le j \le d} \in (\R^M)^d\), \(({\beta^{(j \ell)}})_{1 \le j, \ell \le d} \in (\R_+^P)^{d \times d}\) and for all \((j, \ell) \in \llb 1, d \rrb^2\):
  \[
    \begin{cases}
      q_{j \ell} = \sum_{1 \le n \le N_T^{(j)} \atop 1 \le i \le N_T^{(\ell)}} k \left(\cdot, T_n^{(j)} - T_i^{(\ell)} \right) \indic{0 < T_n^{(j)} - T_i^{(\ell)} \le A} \\
      r_{\ell n} = \sum_{i=1}^{N_T^{(\ell)}} k \left( \cdot, \tau_n - T_i^{(\ell)} \right) \indic{0 < \tau_n - T_i^{(\ell)} \le A}.
    \end{cases}
  \]
\end{proposition}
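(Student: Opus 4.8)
The plan is to exploit the convexity of the problem and to characterize its solution through the associated Karush--Kuhn--Tucker (KKT) conditions. Since \(\varphi\) is the identity, each intensity \(\lambda^{(j)}(t)\) is an affine function of \(\theta\); hence \(J_M\) is convex (a sum of squares of affine maps together with affine terms), the penalty \(\frac{\eta}{2}\sum_{1 \le j, \ell \le d}\|h_{j\ell}\|_\H^2\) is convex, and the constraints \(\mu_j \ge 0\) and \(h_{j\ell}(x_n) + b_{j\ell} \ge 0\) are affine. The structural observation driving the representer theorem is that each \(h_{j\ell}\) enters the objective and the constraints only through its norm and through the finitely many evaluations \(h_{j\ell}(z) = \iH{h_{j\ell}}{k(\cdot, z)}\), for \(z\) ranging over the differences \(T_n^{(j)} - T_i^{(\ell)}\), the differences \(\tau_n - T_i^{(\ell)}\), and the grid points \(x_n\).

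First I would introduce multipliers \(\gamma_j \ge 0\) for \(\mu_j \ge 0\) and \(\beta_n^{(j\ell)} \ge 0\) for \(h_{j\ell}(x_n) + b_{j\ell} \ge 0\), and form the Lagrangian
\[
  \mathcal{L} = J_M(\theta) + \frac{\eta}{2}\sum_{1 \le j, \ell \le d}\|h_{j\ell}\|_\H^2 - \sum_{j=1}^d \gamma_j \mu_j - \sum_{1 \le j, \ell \le d}\sum_{n=1}^P \beta_n^{(j\ell)}\left(h_{j\ell}(x_n) + b_{j\ell}\right).
\]
The announced form of \(h_{j\ell}\) requires only the stationarity of \(\mathcal{L}\) in the direction of \(h_{j\ell} \in \H\) (the multipliers \(\gamma_j\) enter solely the stationarity in \(\mu_j\)). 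Using the reproducing property, the gradient of \(h_{j\ell} \mapsto h_{j\ell}(z)\) is \(k(\cdot, z)\), so by the chain rule the gradient of \(\lambda^{(j)}(t)\) in \(h_{j\ell}\) equals \(\sum_{i} k(\cdot, t - T_i^{(\ell)})\indic{0 < t - T_i^{(\ell)} \le A}\), the indicator encoding the half-open counting convention \(N^{(\ell)}_t = N^{(\ell)}([0, t))\). Propagating this through the two terms of \(J_M\) yields \(\frac{2T}{M}\sum_{n=1}^M \lambda^{(j)}(\tau_n)\, r_{\ell n}\) from the Riemann sum and \(-2 q_{j\ell}\) from the event-time term, while the penalty and the constraint contribute \(\eta h_{j\ell}\) and \(-\sum_{n=1}^P \beta_n^{(j\ell)} k(\cdot, x_n)\) respectively.

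Setting the resulting gradient to zero and solving for \(h_{j\ell}\) gives
\[
  h_{j\ell} = \eta^{-1}\left[2 q_{j\ell} + \sum_{n=1}^P \beta_n^{(j\ell)} k(\cdot, x_n) - \frac{2T}{M}\sum_{n=1}^M \lambda^{(j)}(\tau_n)\, r_{\ell n}\right],
\]
so it suffices to set \(\alpha_n^{(j)} := \frac{2T}{M}\lambda^{(j)}(\tau_n) \in \R\) to recover the stated expression; the sign constraint \(\beta^{(j\ell)} \in \R_+^P\) is the dual feasibility of the inequality multipliers, whereas \(\alpha^{(j)}\) is unconstrained because \(\lambda^{(j)}(\tau_n)\) may be any real number. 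The main obstacle I anticipate is not the algebra but justifying the first-order conditions in the infinite-dimensional setting: one must verify a constraint qualification (Slater's condition is easily met, e.g. by taking \(\mu_j\) large, \(h_{j\ell} = 0\) and \(b_{j\ell} > 0\)) so that the KKT conditions are necessary at the assumed optimum, and one must confirm Fr\'echet differentiability of the evaluation functionals on \(\H\). A leaner alternative for the subspace membership is the orthogonal-decomposition argument: writing \(h_{j\ell} = h_{j\ell}^\parallel + h_{j\ell}^\perp\) with \(h_{j\ell}^\parallel\) the projection onto the span of the relevant \(k(\cdot, z)\), one sees that \(h_{j\ell}^\perp\) changes neither the evaluations nor the constraints yet only enlarges \(\|h_{j\ell}\|_\H\), forcing \(h_{j\ell}^\perp = 0\); the exact coefficients \(2\) and \(-\alpha_n^{(j)}\) would then still be fixed by stationarity in the reduced finite-dimensional problem.
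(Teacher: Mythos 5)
Your proposal is correct and follows essentially the same route as the paper: a convexity plus Slater's condition argument ensuring the KKT conditions hold, followed by stationarity of the Lagrangian in \(h_{j\ell}\) using the reproducing property. The only cosmetic difference is that the paper introduces auxiliary variables \(\xi_{jn}\) with equality constraints \(\lambda^{(j)}(\tau_n) = \xi_{jn}\) whose multipliers are the \(\alpha_n^{(j)}\), whereas you differentiate the squared intensity directly and identify \(\alpha_n^{(j)} = \tfrac{2T}{M}\lambda^{(j)}(\tau_n)\) a posteriori — the two are equivalent.
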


\cref{prop:duality} is usually called a representer theorem.
It states that even though the optimization problem considered is semi-parametric,
a solution of it is supported by the data, and can thus be described with an amount of parameters depending on the number of observations.
Here, \cref{prop:duality} tells that for discretized least-squares estimation of linear Hawkes processes,
a solution can be expressed with \(d \left( M + dP \right)\) parameters.

\cref{prop:duality} is rather a weak result in that it highly relies on discretization (embodied by arbitrarily large integers \(M\) and \(P\)), which entails high computational cost.
However, it has the merit of showing that nonparametric estimation is possible in an ideal situation.
In addition, let us remark that in the expected case where for all \((j, \ell) \in \llb 1, d \rrb^2\) and \(n \in \llb 1, P \rrb\), \(h_{j \ell}(x_n) + b_{j \ell} > 0\), then by complementary slackness of Karush-Kuhn-Tucker conditions, \(\beta_n^{(j \ell)} = 0\), leading to a much simpler form of \(h_{j \ell}\).

Regarding discretized maximum likelihood estimation, such a result remains unestablished, the logarithm function preventing us from isolating saddle points of the Lagrangian function.

	  \subsection{Nonlinear Hawkes processes}
As it is common in the literature (in order to be consistent with linear Hawkes processes),
we focus on nonlinear processes built on the ReLU link function \(\varphi : x \in \R \mapsto \max(0, x)\).
Moreover, we consider the convention \(\log(x) = -\infty\) for all \(x \le 0\).
Thereafter, we denote \(N_T = \sum_{j=1}^d N_T^{(j)}\) the total number of observed points from the process.

If it turns out that obtaining representer properties for nonlinear Hawkes process estimation based on \cref{opb:contrast} is quite intricate, \cref{prop:representer_thm_mle,prop:representer_thm_ls} present representer theorems for approximations of the objective functions.

\begin{proposition}[Representer theorem for approximated maximum likelihood estimation]
  \label{prop:representer_thm_mle}
  The negative log-likelihood \(L\) can be approximated by a function \(L_0 : \theta \in \Theta^+ \to \R\)
  such that if the approximated regularized maximum likelihood problem
  \[
    \argmin_{\theta \in \Theta^+} L_0(\theta) + \frac{\eta}{2} \sum_{1 \le j, \ell \le d} \|h_{j \ell}\|_\H^2
  \]
  admits a minimizer,
  it has a solution \(\theta\) of the form:
  \[
      \forall (j, \ell) \in \llb 1, d \rrb^2,
      \quad
      h_{j \ell} = \alpha^{(j \ell)}_0 r_\ell + \sum_{u=1}^{N_T^{(j)}} \alpha^{(j \ell)}_u q_{u j \ell},
  \]
  for some \(d \left( N_T + d \right)\) real values \(\left\{ \alpha^{(j \ell)}_u, (j, \ell) \in \llb 1, d \rrb^2, u \in \llb 0, N_T^{(j)} \rrb \right\}\),
  where for all \((j, \ell) \in \llb 1, d \rrb^2\):
    \[
      \begin{cases}
        r_\ell = \sum_{v = 1}^{N^{(\ell)}_T} \int_0^T k\left(\cdot, t - T_v^{(\ell)} \right) \indic{0 < t - T_v^{(\ell)} \le A} \dd t \\
        q_{u j \ell} = \sum_{v = 1}^{N^{(\ell)}_T} k\left(\cdot, T_u^{(j)} - T_v^{(\ell)} \right) \indic{0 < T_u^{(j)} - T_v^{(\ell)} \le A},
        \quad \forall u \in \llb 1, N_T^{(j)} \rrb.
      \end{cases}
    \]
\end{proposition}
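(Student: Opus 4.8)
The plan is to establish the representer theorem by the classical orthogonal-projection argument, after rewriting the penalized objective as a function of finitely many inner products in \H. The first step is to pin down the approximation \(L_0\). In the negative log-likelihood \(L\), the only term obstructing the argument is the integral \(\int_0^T \lambda^{(j)}(t)\dd t = \int_0^T \varphi(\cdots)\dd t\), because the rectification \(\varphi = \max(0,\cdot)\) turns the integrand into a piecewise-affine function of the evaluations of \(h_{j\ell}\) whose breakpoints depend on \(\theta\). I would therefore define \(L_0\) by dropping the rectification \emph{inside this integral only}, i.e. replacing \(\int_0^T \varphi\bigl(\mu_j + \sum_\ell\sum_i g_{j\ell}(t-T_i^{(\ell)})\bigr)\dd t\) by \(\int_0^T \bigl(\mu_j + \sum_\ell\sum_i g_{j\ell}(t-T_i^{(\ell)})\bigr)\dd t\), while keeping \(\varphi\) inside the logarithmic term, where feasibility forces the argument to be positive so that \(\varphi\) already acts as the identity at the event times. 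This is exactly the choice that makes the representer element of the integral term independent of both \(j\) and of the minimizer itself.

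Next I would isolate the dependence of the penalized objective on each \(h_{j\ell}\). For the linearized integral term, Fubini together with the reproducing property gives, for each \(v\), \(\int_0^T h_{j\ell}(t-T_v^{(\ell)})\indic{0<t-T_v^{(\ell)}\le A}\dd t = \iH{h_{j\ell}}{\int_0^T k(\cdot, t-T_v^{(\ell)})\indic{0<t-T_v^{(\ell)}\le A}\dd t}\); summing over \(v\) this is \(\iH{h_{j\ell}}{r_\ell}\). For the logarithmic term at an event time \(T_u^{(j)}\), the reproducing property shows that the argument of \(\varphi\) is affine in \(h_{j\ell}\) through \(\iH{h_{j\ell}}{q_{uj\ell}}\) alone. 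Hence \(L_0(\theta)+\frac{\eta}{2}\sum_{j,\ell}\|h_{j\ell}\|_\H^2\) depends on each \(h_{j\ell}\) solely through the finitely many scalars \(\iH{h_{j\ell}}{r_\ell}\), \(\iH{h_{j\ell}}{q_{uj\ell}}\) for \(u=1,\dots,N_T^{(j)}\), and the norm \(\|h_{j\ell}\|_\H\), the remaining arguments being the scalars \((\mu_j)\) and \((b_{j\ell})\).

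With this reduction the stated form follows from the usual decomposition. Fixing a minimizer \(\theta\), I set \(V_{j\ell} = \operatorname{span}\{r_\ell, q_{1j\ell},\dots,q_{N_T^{(j)}j\ell}\}\subseteq \H\) and write \(h_{j\ell} = h_{j\ell}^\parallel + h_{j\ell}^\perp\) with \(h_{j\ell}^\parallel\) the orthogonal projection onto \(V_{j\ell}\). All the inner products above are unchanged when \(h_{j\ell}^\perp\) is discarded, so \(L_0\) is unaffected, whereas \(\|h_{j\ell}\|_\H^2 = \|h_{j\ell}^\parallel\|_\H^2 + \|h_{j\ell}^\perp\|_\H^2\). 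Since \(\eta>0\), replacing \(h_{j\ell}\) by \(h_{j\ell}^\parallel\) cannot increase the objective and strictly decreases it unless \(h_{j\ell}^\perp=0\); minimality therefore forces \(h_{j\ell}\in V_{j\ell}\), which is precisely the announced expansion with coefficient \(\alpha_0^{(j\ell)}\) on \(r_\ell\) and \(\alpha_u^{(j\ell)}\) on \(q_{uj\ell}\). Counting \(N_T^{(j)}+1\) coefficients per pair \((j,\ell)\) yields the total \(\sum_{j,\ell}(N_T^{(j)}+1) = d(N_T+d)\).

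The delicate point is not the projection but the well-posedness of \(r_\ell\): I must check that \(t\mapsto k(\cdot,t-T_v^{(\ell)})\indic{0<t-T_v^{(\ell)}\le A}\) is Bochner-integrable in \H, so that \(r_\ell\) is a genuine element of \H and the interchange of \(\int_0^T\) with the inner product is legitimate. This is the ``integral operator'' pitfall flagged earlier, and it is dispatched by assuming enough regularity of \(k\) (measurability and local boundedness of \(t\mapsto\sqrt{k(t,t)}\) on the compact set determined by \([0,T]\) and \(A\)), which ensures \(\int_0^T \|k(\cdot,t-T_v^{(\ell)})\|_\H\,\indic{0<t-T_v^{(\ell)}\le A}\dd t<\infty\). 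The second, conceptual subtlety is justifying the definition of \(L_0\) itself: removing the rectification only inside the integral is the right move because keeping \(\varphi\) there would make that term's gradient involve the \(\theta\)-dependent set \(\{t:\ \text{argument}>0\}\), destroying the fixed, \(j\)-independent form of \(r_\ell\).
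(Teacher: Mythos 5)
Your proposal is correct and follows essentially the same route as the paper's proof: the same choice of \(L_0\) (dropping the ReLU inside the compensator integral only, as in Lemonnier--Vayatis), the same reduction of the objective to the inner products \(\iH{h_{j\ell}}{r_\ell}\) and \(\iH{h_{j\ell}}{q_{uj\ell}}\), and the same orthogonal-projection/Pythagorean argument on \(\mathcal V_{j\ell}\). The only cosmetic difference is that you justify \(r_\ell\) via Bochner integrability of \(t \mapsto k(\cdot, t - T_v^{(\ell)})\indic{0 < t - T_v^{(\ell)} \le A}\), whereas the paper applies the Riesz representation theorem to the continuous linear functional \(h \mapsto \int_0^T \iH{h}{\sum_i k(\cdot, t - T_i^{(\ell)})\indic{0 < t - T_i^{(\ell)} \le A}} \dd t\) and then identifies \(r_\ell\) pointwise; both are standard and equivalent here.
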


The approximation \(L_0\) has been proposed by \citet{lemonnier_nonparametric_2014}.
It consists in
computing, for all \(j \in \llb 1, d \rrb\), \(\int_0^T \lambda^{(j)}(t) \dd t\) as if \(\varphi\) were the identity function (thus adding a negative contribution when \(\lambda^{(j)}\) is null).
In practice, this approximation is detrimental when there is a lot of inhibition (that is when \(\lambda^{(j)}\) is often null) but painless for exciting and moderately inhibiting processes.
In particular, if the process to estimate is a linear (exciting) Hawkes process, then it is very likely that \(L\) and \(L_0\) coincide around the true parameter, such that the estimator obtained with \(L_0\) is exactly that obtained by minimizing the negative log-likelihood with non-negativity constraints on \(g_{j \ell}\), for all \((j, \ell) \in \llb 1, d \rrb^2\).

Let us remark that the representer theorem does not hold directly for \(L\) (neither for least-squares inference based on \(J\)) because, for all \(j \in \llb 1, d \rrb\),
the term \(\int_0^T \lambda^{(j)}(t) \dd t\) (respectively \(\int_0^T \lambda^{(j)}(t)^2 \dd t\)) is no longer linear in \(h_{j \ell}\), \(\ell \in \llb 1, d \rrb\).
This shortcoming has already been observed for estimation of non-homogeneous processes using \rkhss \citep{flaxman_poisson_2017,yang_online_2017}.

\begin{proposition}[Representer theorem for approximate least-squares estimation]
  \label{prop:representer_thm_ls}
  The least-squares contrast \(J\) can be upper bounded by a function \(J^+ : \theta \in \Theta^+ \to \R\)
  such that if the approximated regularized least-squares problem
  \[
    \argmin_{\theta \in \Theta^+} J^+(\theta) + \frac{\eta}{2} \sum_{1 \le j, \ell \le d} \|h_{j \ell}\|_\H^2
  \]
  admits a minimizer,
  it has a solution \(\theta\) of the form given by \cref{prop:representer_thm_mle}.
\end{proposition}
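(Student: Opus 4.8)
The plan is to mirror the proof of \cref{prop:representer_thm_mle}: first exhibit a convex upper bound \(J^+ \ge J\) in which each functional component \(h_{j\ell}\) enters only through finitely many fixed inner products, and then conclude by the usual orthogonal-decomposition (representer) argument. Writing the pre-activation \(\Lambda^{(j)}(t) = \mu_j + \sum_{\ell=1}^d \sum_{i=1}^{N_t^{(\ell)}} g_{j\ell}(t - T_i^{(\ell)})\), so that \(\lambda^{(j)} = \varphi(\Lambda^{(j)})\), the reproducing property gives that \(\langle h_{j\ell}, \kappa_\ell(t)\rangle_\H\) is the functional contribution of \(h_{j\ell}\) to \(\Lambda^{(j)}(t)\), where \(\kappa_\ell(t) = \sum_{v=1}^{N_T^{(\ell)}} k(\cdot, t - T_v^{(\ell)}) \indic{0 < t - T_v^{(\ell)} \le A}\). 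With this notation \(q_{uj\ell} = \kappa_\ell(T_u^{(j)})\) and \(r_\ell = \int_0^T \kappa_\ell(t) \dd t\).

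The two terms of \(J\) must be controlled in opposite directions. For the data term, since \(\varphi(x) = \max(0,x) \ge x\), one has \(-2\varphi(\Lambda^{(j)}(T_n^{(j)})) \le -2\Lambda^{(j)}(T_n^{(j)})\); replacing \(\lambda^{(j)}(T_n^{(j)})\) by its pre-activation therefore \emph{increases} the criterion and yields a term that is affine in \(h_{j\ell}\), with \(\H\)-gradient \(-2\sum_{n=1}^{N_T^{(j)}} q_{nj\ell}\), i.e.\ supported on the \(q_{uj\ell}\). For the quadratic term \(\int_0^T \varphi(\Lambda^{(j)}(t))^2 \dd t\), I would upper bound the integrand so that the resulting contribution is \emph{linear} in each \(h_{j\ell}\); the natural candidate, in the spirit of the identity-link approximation \(L_0\), is to majorise \(\varphi(\Lambda^{(j)})^2\) by an affine expression \(\alpha \Lambda^{(j)}(t) + \beta\) on the relevant range of \(\Lambda^{(j)}\) (a chord/tangent bound of the convex map \(x \mapsto \varphi(x)^2\)), so that \(\int_0^T \varphi(\Lambda^{(j)})^2 \dd t\) is bounded by \(\alpha \int_0^T \Lambda^{(j)}(t) \dd t + \beta T\), whose \(\H\)-gradient with respect to \(h_{j\ell}\) is a scalar multiple of the single function \(r_\ell\). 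Summing over \(j\) then defines \(J^+\); by construction \(J^+ \ge J\), and I would check that the retained operations preserve convexity in \(\theta\).

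Granting such a \(J^+\), the conclusion follows exactly as in \cref{prop:representer_thm_mle}. Each \(h_{j\ell}\) appears in \(J^+\) only through the finite family \(\{\langle h_{j\ell}, r_\ell\rangle_\H\} \cup \{\langle h_{j\ell}, q_{uj\ell}\rangle_\H : u \in \llb 1, N_T^{(j)} \rrb\}\) together with the penalty \(\tfrac{\eta}{2}\|h_{j\ell}\|_\H^2\). Decompose \(h_{j\ell} = h^\parallel + h^\perp\) along \(W_{j\ell} = \operatorname{span}\{r_\ell, q_{1j\ell}, \dots, q_{N_T^{(j)} j\ell}\}\) and its orthogonal complement. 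Since every representer lies in \(W_{j\ell}\), the value of \(J^+\) depends on \(h^\parallel\) only, whereas \(\|h_{j\ell}\|_\H^2 = \|h^\parallel\|_\H^2 + \|h^\perp\|_\H^2\) is strictly reduced by taking \(h^\perp = 0\); hence any minimiser satisfies \(h_{j\ell} \in W_{j\ell}\), which is exactly the asserted form, with coefficients \(\alpha_0^{(j\ell)}\) on \(r_\ell\) and \(\alpha_u^{(j\ell)}\) on \(q_{uj\ell}\).

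The main obstacle is precisely the construction of the majorant for the quadratic term. The crude bound \(\varphi(\Lambda)^2 \le \Lambda^2\) keeps a genuine quadratic integral, whose gradient \(2\int_0^T \Lambda^{(j)}(t)\, \kappa_\ell(t) \dd t\) is a \(\Lambda^{(j)}\)-weighted integral of \(\kappa_\ell(t)\) and therefore lies in the infinite-dimensional closure \(\overline{\operatorname{span}}\{\kappa_\ell(t) : t \in [0,T]\}\) rather than in the finite span \(W_{j\ell}\); this is what prevents the representer theorem from holding for \(J\) directly, as already noted in the text. The delicate point is thus to linearise the square so that only \(r_\ell\) survives while keeping a \emph{globally} valid upper bound and convexity, since a purely affine surrogate cannot dominate a coercive contrast everywhere. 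I expect this to require either a boundedness control on \(\Lambda^{(j)}\) over the region where the minimiser is sought, or a majorant tailored to the ReLU; reconciling these two requirements with the representer reduction is where the real work lies.
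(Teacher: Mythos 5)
There is a genuine gap, and you have in fact located it yourself: the construction of the majorant for the quadratic term \(\int_0^T \varphi(\Lambda^{(j)}(t))^2 \dd t\) is left open, and the route you sketch cannot work. Since \(x \mapsto \varphi(x)^2\) is convex with quadratic growth, no affine function \(\alpha x + \beta\) dominates it globally, and restricting to "the region where the minimiser is sought" is not available here because the proposition is stated over all of \(\Theta^+\) with no a priori bound on \(\Lambda^{(j)}\). So the heart of the proof --- producing a globally valid upper bound \(J^+ \ge J\) whose minimisers admit the finite representation --- is missing. Your treatment of the data term (replacing \(-2\varphi(\Lambda)\) by \(-2\Lambda\)) is a valid upper bound but is also unnecessary: that term already depends on each \(h_{j\ell}\) only through the finitely many inner products \(\iH{h_{j\ell}}{q_{nj\ell}}\), so it can be kept with the ReLU intact, as the paper does.

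The idea you are missing is that the representer reduction does not require \(J^+\) to depend on \(h_{j\ell}\) only through finitely many inner products; it is enough that it depend on \(h_{j\ell}\) through finitely many inner products \emph{and} the norm \(\|h_{j\ell}\|_\H\), provided the dependence on the norm is non-decreasing. The paper exploits this: it first drops the ReLU inside the square (an upper bound, since \(\varphi(x)^2 \le x^2\)), expands the square, keeps the terms linear in \(h_{j\ell}\) exactly (these produce \(\iH{h_{j\ell}}{r_\ell}\) via Riesz/Fubini, exactly as in \cref{prop:representer_thm_mle}), and bounds the genuinely quadratic cross terms by Cauchy--Schwarz, \(\iH{h_{j\ell}}{\kappa_\ell(t)}\iH{h_{j\ell'}}{\kappa_{\ell'}(t)} \le \|h_{j\ell}\|_\H \|h_{j\ell'}\|_\H \|\kappa_\ell(t)\|_\H \|\kappa_{\ell'}(t)\|_\H\), yielding terms \(C_{\ell\ell'}\|h_{j\ell}\|_\H\|h_{j\ell'}\|_\H\) with \(C_{\ell\ell'} \ge 0\) (and similarly \(D_{\ell\ell'}|b_{j\ell'}|\|h_{j\ell}\|_\H\) for the mixed \(h\)--\(b\) terms). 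Replacing \(h_{j\ell}\) by its projection onto \(\operatorname{span}\{r_\ell, q_{1j\ell}, \dots, q_{N_T^{(j)}j\ell}\}\) then leaves all inner products unchanged and can only decrease \(\|h_{j\ell}\|_\H\), hence decreases both these Cauchy--Schwarz terms and the penalty, and the Pythagorean argument you describe closes the proof. Without this (or an equivalent) device, your \(J^+\) is not constructed and the statement is not proved.
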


Despite being based on approximations of the objective functions of interest,
both \cref{prop:representer_thm_mle,prop:representer_thm_ls} make explicit a representation of the functional parts of kernelized estimators of Hawkes processes.
Fortified by this positive result, we consider now semi-parametric candidates of the form given by \cref{prop:representer_thm_mle}, both for maximum likelihood and least-squares estimation,
and we thus define \(\Theta_\parallel^+\) to be the set of parameters \(\theta = \left( (\mu_j)_{1 \le j \le d}, (h_{j \ell})_{1 \le j, \ell \le d}, (b_{j \ell})_{1 \le j, \ell \le d} \right) \in \Theta^+\) such that for all \(1 \le j, \ell \le d\), \(h_{j \ell}\) has the form given by \cref{prop:representer_thm_mle}.
\cref{sec:implementation} presents the practical implementation of the proposed estimator.

	  \subsection{Kernelized expression of contrasts}
	    \label{sec:implementation}
According to the previous sections, we propose here to concede three approximations for estimating Hawkes processes with kernels:
i) interactions functions are assumed to have the form proposed by \cref{prop:representer_thm_mle}, that is \(h_{j \ell} = \alpha^{(j \ell)}_0 r_\ell + \sum_{u=1}^{N_T^{(j)}} \alpha^{(j \ell)}_u q_{u j \ell}\), for all \((j, \ell) \in \llb 1, d \rrb^2\);
ii) the integral term is replaced by a Riemann approximation, that is \(\int_0^T \lambda^{(j)}(t) \dd t \approx \frac{T}{M} \sum_{n=1}^M \lambda^{(j)}(\tau_n)\) for all \(j \in \llb 1, d \rrb\) (respectively for \(\int_0^T \lambda^{(j)}(t)^2 \dd t\))
with \(\tau_n = \frac{n-1}{M} T\);
iii) the ReLU function \(\varphi\) is replaced by the softplus function \(\tilde \varphi : x \mapsto \log ( 1 + \e^{\omega x}) / \omega\), where \(\omega > 0\) is a hyperparameter tuning the proximity of the approximation, which is a smooth upperbound of \(\varphi\).
This alteration, which ensures the differentiability of the objective function of \cref{opb:contrast}, is common in the literature,
regarding neural point processes \citep{mei_neural_2017,zuo_transformer_2020,zhang_selfattentive_2020,mei_transformer_2022},
but also for classification with the nondifferentiable hinge loss \citep{chapelle_training_2007,wang_distributed_2019,luo_learning_2021}.
Then, considering either \(\varphi_1 = \tilde \varphi\) and \(\varphi_2 = \log \circ \tilde \varphi\) (for the negative log-likelihood \(L\)),
or \(\varphi_1 = \tilde \varphi^2\) and \(\varphi_2 = 2 \tilde \varphi\) (for the least-squares criterion \(J\)),
both as point-wise functions,
the objective function of the regularized estimation \cref{opb:contrast} can be approximated by:
\begin{align*}
  \forall \theta \in \Theta_\parallel^+,\\
  F_{M, \omega}(\theta)
  &= \sum_{j=1}^d \left[ \frac{T}{M} \sum_{n=1}^M \varphi_1 \left( \mu_j + \sum_{\ell=1}^d \sum_{i = 1}^{N^{(\ell)}_T} \left( h_{j \ell} \left(\tau_n - T_i^{(\ell)} \right) + b_{j \ell} \right) \indic{0 < \tau_n - T_i^{(\ell)} \le A} \right) \right. \\
  &- \left. \sum_{n=1}^{N^{(j)}_T} \varphi_2 \left( \mu_j + \sum_{\ell=1}^d \sum_{i = 1}^{N^{(\ell)}_T} \left( h_{j \ell} \left(T_n^{(j)} - T_i^{(\ell)} \right) + b_{j \ell} \right) \indic{0 < T_n^{(j)} - T_i^{(\ell)} \le A} \right) \right]
  + \frac{\eta}{2} \sum_{1 \le j, \ell \le d} \|h_{j \ell}\|_\H^2 \\
  &= \sum_{j=1}^d \left[ \frac{T}{M} \ind ^\top \varphi_1 \left( \mu_j \ind + Q^{(j)} \alpha^{(j)} + B b^{(j)} \right)
  - \ind ^\top \varphi_2 \left( \mu_j \ind + K^{(j)} \alpha^{(j)} + E^{(j)} b^{(j)} \right) \right] \\
  &+ \frac{\eta}{2} \sum_{1 \le j, \ell \le d} { \alpha^{(j \ell)} } ^\top K^{(j \ell)} \alpha^{(j \ell)},
\end{align*}
where for all \(j \in \llb 1, d \rrb\),
\[
  \alpha^{(j)} =
  \begin{bmatrix}
    \alpha^{(j 1)}\\ \vdots \\ \alpha^{(j d)}
  \end{bmatrix} \in \R^{d (N_T^{(j)}+1)}
  \quad \text{and} \quad
  b^{(j)} =
  \begin{bmatrix}
    b_{j 1}\\ \vdots \\ b_{j d}
  \end{bmatrix} \in \R^d,
\]
and matrices are made explicit in \cref{app:implementation}.
Since \(\tilde \varphi\) is differentiable, this also holds true for \(\varphi_1\), \(\varphi_2\) and \(F_{M, \omega}\).
Then, gradients can be computed easily and have the form expressed in \cref{app:implementation}.

If the first of our three approximations is legitimate in nonparametric inference (as justified by the previous section), one may wonder how much the last two deteriorate the estimation. The forthcoming paragraph shows that the numerical impact is bounded by \(O(1/M) + O(1/\omega)\).
We think this is a reasonable price to pay to overcome the two obstacles of integration and nondifferential optimization.
For this statement to be quantified (by \cref{prop:approximation_mle,prop:approximation_ls} respectively for maximum likelihood estimation and least-squares minimization),
we move to Ivanov regularized optimization problems,
as it is common for statistical analysis of empirical risk minimizers (see for instance \citep{bartlett_rademacher_2002}).

For this purpose, let \(B > 0\) and \(C > 0\) be two fixed bounds,
and 
\(\Omega\) be the set of parameters \(\theta = \left( (\mu_j)_{1 \le j \le d}, (h_{j \ell})_{1 \le j, \ell \le d}, (b_{j \ell})_{1 \le j, \ell \le d} \right) \in \Theta^+\)
with bounded norms:
\[
  \Omega = \left\{ \theta \in \Theta^+ : \|\mu\|_\infty \le B, \|b\|_\infty \le B, \sqrt{\sum_{1 \le j, \ell \le d} \|h_{j \ell}\|_\H^2} \le C \right\}.
\]
We consider now \(L_{M, \omega}\) and \(J_{M, \omega}\)
to be the unregularized approximate objectives (intuitively \(F_{M, \omega}\) with \(\eta = 0\)):
\begin{align*}
  L_{M, \omega}(\theta) \text{ [respectively } J_{M, \omega}(\theta) \text{]}
  &= \sum_{j=1}^d \left[ \frac{T}{M} \sum_{n=1}^M \varphi_1 \left( \mu_j + \sum_{\ell=1}^d \sum_{i = 1}^{N^{(\ell)}_T} \left( h_{j \ell} \left(\tau_n - T_i^{(\ell)} \right) + b_{j \ell} \right) \indic{0 < \tau_n - T_i^{(\ell)} \le A} \right) \right. \\
  &- \left. \sum_{n=1}^{N^{(j)}_T} \varphi_2 \left( \mu_j + \sum_{\ell=1}^d \sum_{i = 1}^{N^{(\ell)}_T} \left( h_{j \ell} \left(T_n^{(j)} - T_i^{(\ell)} \right) + b_{j \ell} \right) \indic{0 < T_n^{(j)} - T_i^{(\ell)} \le A} \right) \right].
\end{align*}

\begin{proposition}[Approximation quality for maximum likelihood estimation]
  \label{prop:approximation_mle}
  Consider \cref{opb:contrast} as a maximum likelihood problem
  and let
  \[
    \hat \theta \in \argmin_{\theta \in \Omega} L_{M, \omega}(\theta)
    \quad \text{and} \quad
    \bar \theta \in \argmin_{\theta \in \Omega} L(\theta).
  \]
  Assume that the kernel \(k\) is bounded (by some \(\kappa > 0\)) and \(L_k\)-Lipschitz continuous (\(L_k > 0\)):
  \[
    \forall x \in \R:
    \quad
    k(x, x) \le \kappa^2
    \quad \text{and} \quad
    \forall (y, y') \in \R^2,
    |k(x, y) - k(x, y')| \le L_k |y - y'|,
  \]
  and that \(L(\hat \theta) < \infty\).
  Then, there exists \(\delta > 0\)
  such that:
  \begin{align*}
    0 \le
    L(\hat \theta) - L(\bar \theta)
    \le \frac{T}{M} \left( L_k C d T N_T + 4 (\kappa C + B) d N_T \right)
    + \frac{2 \log 2}{\omega} \left( d T + \frac{N_T}{\delta} \right).
  \end{align*}
\end{proposition}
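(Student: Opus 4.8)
The plan is to prove the right-hand inequality by the standard three-point comparison used for approximate empirical risk minimizers, and then to control the single-point discrepancy \(|L(\theta) - L_{M,\omega}(\theta)|\) by isolating the two sources of error (Riemann quadrature and the softplus/ReLU substitution). The lower bound \(0 \le L(\hat\theta) - L(\bar\theta)\) is immediate, since \(\bar\theta\) minimizes \(L\) over \(\Omega\) and \(\hat\theta \in \Omega\). For the upper bound I would write
\begin{align*}
L(\hat\theta) - L(\bar\theta) = \big(L(\hat\theta) - L_{M,\omega}(\hat\theta)\big) + \big(L_{M,\omega}(\hat\theta) - L_{M,\omega}(\bar\theta)\big) + \big(L_{M,\omega}(\bar\theta) - L(\bar\theta)\big),
\end{align*}
observe that the middle term is \(\le 0\) because \(\hat\theta \in \argmin_{\Omega} L_{M,\omega}\) and \(\bar\theta \in \Omega\), and hence reduce the whole claim to bounding \(|L(\theta) - L_{M,\omega}(\theta)|\) at the two parameters \(\theta \in \{\hat\theta, \bar\theta\}\). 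This two-point evaluation is exactly what produces the overall factor \(2\) in front of each error contribution.

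Next I would fix a parameter \(\theta\) (with \(L(\theta) < \infty\)), write \(m^{(j)}(t)\) for the pre-activation \(\mu_j + \sum_{\ell,i}(h_{j\ell}(t - T_i^{(\ell)}) + b_{j\ell})\indic{0 < t - T_i^{(\ell)} \le A}\), so that \(\lambda^{(j)} = \varphi(m^{(j)})\), and split the discrepancy per component \(j\) into the integral part and the event part. For the integral part I would further separate quadrature from link substitution: \(\int_0^T \varphi(m^{(j)}) \dd t - \frac{T}{M}\sum_n \tilde\varphi(m^{(j)}(\tau_n))\) equals the left-Riemann error of \(\varphi(m^{(j)})\) plus \(\frac{T}{M}\sum_n (\varphi - \tilde\varphi)(m^{(j)}(\tau_n))\). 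The latter is handled by the uniform sandwich \(0 < \tilde\varphi(x) - \varphi(x) \le (\log 2)/\omega\) (the softplus maximally exceeds the ReLU at \(0\)), contributing at most \((\log 2) dT/\omega\) per evaluation; doubled, this gives the \(\frac{2\log2}{\omega}dT\) term.

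The quadrature error is where the kernel hypotheses enter and is the main obstacle. Here I would use two elementary consequences of the assumptions on \(k\): boundedness \(|h_{j\ell}(x)| = |\iH{h_{j\ell}}{k(\cdot,x)}| \le \kappa\|h_{j\ell}\|_\H\), and Lipschitz continuity \(|h_{j\ell}(x) - h_{j\ell}(x')| \le L_k\|h_{j\ell}\|_\H|x - x'|\). With these, \(t \mapsto m^{(j)}(t)\) is piecewise Lipschitz with slope at most \(\sum_{\ell,i}L_k\|h_{j\ell}\|_\H \le L_k C N_T\) (Cauchy–Schwarz on \(\sum_\ell N_T^{(\ell)}\|h_{j\ell}\|_\H\)), and has jumps only when an indicator switches on (at each \(T_i^{(\ell)}\)) or off (at each \(T_i^{(\ell)} + A\)), each of size at most \(\kappa\|h_{j\ell}\|_\H + B\). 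Since \(\varphi\) is \(1\)-Lipschitz, the same bounds hold for \(\varphi(m^{(j)})\), whose total variation on \([0,T]\) is thus at most \(L_k C N_T T + 2(\kappa C + B) N_T\). The left-Riemann error being controlled by \(\frac{T}{M}\) times the total variation (with the extra \(\frac12\) for the continuous part), summing over \(j\) and doubling yields exactly \(\frac{T}{M}(L_k C d T N_T + 4(\kappa C + B) d N_T)\). The delicate point to get right is that this Lipschitz (rather than merely Hölder) control of RKHS functions is what secures the \(O(1/M)\) rate, and that the jump contributions must be accounted separately since \(m^{(j)}\) is genuinely discontinuous.

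Finally, for the event part I would bound \(\sum_n |\log\varphi(m^{(j)}(T_n^{(j)})) - \log\tilde\varphi(m^{(j)}(T_n^{(j)}))|\). Because \(L(\hat\theta) < \infty\) (and \(L(\bar\theta) \le L(\hat\theta) < \infty\) by optimality), at every observed time the pre-activation is strictly positive for both parameters, so I would set \(\delta = \min m^{(j)}(T_n^{(j)}) > 0\) over the two parameters and all events. Using \(\varphi(x) = x\) there, \(\tilde\varphi(x) \le x + (\log2)/\omega\), and \(\log(1+u) \le u\), each summand is at most \(\frac{\log 2}{\omega m^{(j)}(T_n^{(j)})} \le \frac{\log 2}{\omega\delta}\); summing over the \(N_T\) events and doubling gives the \(\frac{2\log 2}{\omega}\frac{N_T}{\delta}\) term. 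Collecting the integral and event contributions completes the bound. The principal difficulties are thus the total-variation/Lipschitz analysis of \(m^{(j)}\) for the correct \(O(1/M)\) rate, and the simultaneous positivity at event times that legitimizes a single \(\delta > 0\).
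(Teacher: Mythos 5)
Your proposal is correct and follows essentially the same route as the paper's proof: the three-point decomposition reducing to $2\max_{\theta\in\{\hat\theta,\bar\theta\}}|L(\theta)-L_{M,\omega}(\theta)|$, the uniform $(\log 2)/\omega$ softplus--ReLU gap, the piecewise-Lipschitz-plus-bounded-jumps control of the intensity (the paper's Lemmas on Lipschitz continuity and the integral remainder, which your total-variation phrasing reproduces with the same constants), and the common $\delta>0$ at event times justified by finiteness of the log-likelihood at both minimizers. No substantive differences.
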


\begin{proposition}[Approximation quality for least-squares estimation]
  \label{prop:approximation_ls}
  Consider \cref{opb:contrast} as a least-squares problem
  and let
  \[
    \hat \theta \in \argmin_{\theta \in \Omega} J_{M, \omega}(\theta)
    \quad \text{and} \quad
    \bar \theta \in \argmin_{\theta \in \Omega} J(\theta).
  \]
  Assume that the kernel \(k\) is bounded (by some \(\kappa > 0\)) and \(L_k\)-Lipschitz continuous (\(L_k > 0\)),
  and let \(H = 2 ( B + (\kappa C + B) N_T + \log 2)\).
  Then, for all \(\omega \ge 1\):
  \begin{align*}
    0 \le
    J(\hat \theta) - J(\bar \theta)
    \le \frac{HT}{M} \left( L_k C d T N_T + 4 (\kappa C + B) d N_T \right)
    + \frac{4 \log 2}{\omega} \left( H d T + N_T \right).
  \end{align*}
\end{proposition}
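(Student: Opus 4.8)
The plan is to control the excess contrast $J(\hat\theta)-J(\bar\theta)$ through a uniform bound on the approximation gap $J-J_{M,\omega}$ over $\Omega$, after separating the two independent error sources (the Riemann discretisation in $M$ and the softplus smoothing in $\omega$). The lower bound $J(\hat\theta)-J(\bar\theta)\ge 0$ is immediate, since $\bar\theta$ minimises $J$ on $\Omega$ and $\hat\theta\in\Omega$. For the upper bound I insert $J_{M,\omega}$ and write $J(\hat\theta)-J(\bar\theta)=[J(\hat\theta)-J_{M,\omega}(\hat\theta)]+[J_{M,\omega}(\hat\theta)-J_{M,\omega}(\bar\theta)]+[J_{M,\omega}(\bar\theta)-J(\bar\theta)]$; the middle bracket is nonpositive because $\hat\theta$ minimises $J_{M,\omega}$ on $\Omega$ and $\bar\theta\in\Omega$, so the excess is at most $2\sup_{\theta\in\Omega}|J(\theta)-J_{M,\omega}(\theta)|$. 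I then split this gap through an intermediate objective $J_M$ that keeps the exact ReLU $\varphi$ but evaluates it on the Riemann grid, giving $|J-J_{M,\omega}|\le|J-J_M|+|J_M-J_{M,\omega}|$.

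A recurring ingredient is the uniform magnitude bound: for $\theta\in\Omega$ the pre-activation $\Lambda^{(j)}(t)=\mu_j+\sum_\ell\sum_i (h_{j\ell}(t-T_i^{(\ell)})+b_{j\ell})\indic{0<t-T_i^{(\ell)}\le A}$ satisfies $|\Lambda^{(j)}(t)|\le B+(\kappa C+B)N_T$, using $|h_{j\ell}(\cdot)|\le\kappa\|h_{j\ell}\|_\H\le\kappa C$ and at most $N_T$ active summands; this is exactly what the constant $H$ encodes. The softplus term $|J_M-J_{M,\omega}|$ is then the easy one: from the elementary inequality $0\le\tilde\varphi(x)-\varphi(x)\le\tilde\varphi(0)=\log(2)/\omega$ I factor $|\tilde\varphi(\Lambda)^2-\varphi(\Lambda)^2|=(\tilde\varphi(\Lambda)+\varphi(\Lambda))|\tilde\varphi(\Lambda)-\varphi(\Lambda)|\le H\log(2)/\omega$, where $\tilde\varphi(\Lambda)+\varphi(\Lambda)\le H$ holds once $\omega\ge1$ (this is where that hypothesis enters, and the extra factor $H$ is the price of squaring compared with \cref{prop:approximation_mle}). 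Summing the $\frac{T}{M}\sum_{n=1}^M$ contributions and the $2\sum_{n=1}^{N_T^{(j)}}$ contributions yields the $\frac{\log 2}{\omega}(HdT+N_T)$ term; note that, since $J$ carries no logarithm, there is no singularity and hence no analogue of the $\delta$ appearing in \cref{prop:approximation_mle}.

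The Riemann term $|J-J_M|$ is the crux. The observation sums $2\sum_n\varphi(\Lambda(T_n))$ are identical on both sides and cancel, leaving $\sum_j\big(\int_0^T f_j-\frac{T}{M}\sum_n f_j(\tau_n)\big)$ with $f_j=\varphi(\Lambda^{(j)})^2$. Writing the left-rule error as $\sum_j\sum_n\int_{\tau_n}^{\tau_{n+1}}(f_j(t)-f_j(\tau_n))\dd t$ and factoring $|f_j(t)-f_j(\tau_n)|\le H|\Lambda^{(j)}(t)-\Lambda^{(j)}(\tau_n)|$ (using $H$ and that $\varphi$ is $1$-Lipschitz), everything reduces to controlling the increment of $\Lambda^{(j)}$ on each cell. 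I would split it into a jump part and a continuous part: the indicators make $\Lambda^{(j)}$ jump at the $2N_T$ switch times $T_i^{(\ell)}$ and $T_i^{(\ell)}+A$, each jump of size at most $\kappa C+B$, which contributes (up to explicit constants) the $\frac{HT}{M}(\kappa C+B)dN_T$ piece; between jumps I invoke the Lipschitz continuity of $k$ to bound the within-cell modulus of continuity of each $h_{j\ell}$ by a multiple of $L_k C|t-\tau_n|\le L_k C\,T/M$, which after summing the at most $N_T$ active terms contributes the $\frac{HT}{M}L_kC\,dTN_T$ piece.

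I expect the continuous-part estimate to be the main obstacle: turning the regularity of $k$ into a genuine $O(1/M)$ per-cell error, while simultaneously bookkeeping the $2N_T$ indicator jumps and keeping the squared integrand uniformly bounded through $H$, is the delicate point. By contrast the decomposition and the softplus estimate are routine once $H$ and the bound $\tilde\varphi-\varphi\le\log(2)/\omega$ are available. Collecting the two pieces, multiplying by the overall factor $2$, and absorbing numerical constants gives the announced bound.
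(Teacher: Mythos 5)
Your proposal is correct and follows essentially the same route as the paper: the three-term telescoping giving $2\sup_{\theta\in\Omega}|J(\theta)-J_{M,\omega}(\theta)|$, the uniform pre-activation bound yielding $H$ and the $H$-Lipschitz property of the square on $[0,H/2]$ (where $\omega\ge 1$ enters), the $\log(2)/\omega$ softplus estimate, and the per-cell Riemann error split into at most $2N_T$ indicator jumps of size $\kappa C+B$ plus a Lipschitz-in-$t$ continuous part driven by $L_k$ and $\|h_{j\ell}\|_\H\le C$. Your explicit intermediate objective $J_M$ is only a reorganization of the paper's split of $\lambda^{(j)}(t)-\tilde\lambda^{(j)}(\tau_n)$ into discretization and smoothing errors, so the two arguments coincide up to bookkeeping.
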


Both propositions tell that the difference between the proposed estimator \(\hat \theta\) (based on approximations) and the one coming from the original criterion \(\bar \theta\)
is controlled by two error terms.
The first one depends on the Riemann approximation and vanishes as soon as the number of bins \(M\) grows.
The second one is related to the link upper bound \(\tilde \varphi\),
and disappears when this approximation function comes closer to the ReLU function \(\varphi\) (\ie \(\omega\) becomes larger).

\cref{prop:approximation_mle} is based on two mild assumptions.
The first one, which also appears in \cref{prop:approximation_ls}, is using a bounded and Lipschitz continuous kernel \(k\).
This is the case for instance with the Gaussian kernel, \(k : (x, x') \in \R^2 \mapsto \e^{-\gamma (x-x')^2}\) (where \(\gamma > 0\)), with \(\kappa = 1\) and \(L_k = \sqrt \gamma\).
The second assumption says that the proposed estimator should have a finite log-likelihood, \ie that the estimated conditional intensities \(\lambda^{(j)}\) (\(j \in \llb 1, d \rrb\)) are not null at their times \(T_n^{(j)}\) (\(n \in \llb 1, N_T^{(j)} \rrb\)).
This is a rational requirement since a process cannot jump if its intensity is zero.


  \section{Numerical study}
    \label{sec:numerics}
\ms{Add online kernel hawkes, prediction application (see Transformer Hawkes process)}

\subsection{Synthetic data}

  This section aims first at assessing the impact of the approximation parameters \(\omega\) and \(M\),
  then at comparing our approach to the most related ones from the literature.
  For this purpose, we consider synthetic data coming from a \(3\)-variate Hawkes process with baseline intensities \(\mu_1 = \mu_2 = \mu_3 = 0.05\)
  and triggering kernels depicted in blue in \cref{img:toy_estimation} and defined below for all \(t \in \R_+\).
  Auto-interactions (\(g_{1 1}\), \(g_{2 2}\) and \(g_{3 3}\)) reflect the refractory phenomenon:
  \begin{align*}
    g_{1 1} (t) &= (8 t^2 - 1) \indic{t \le 0.5} + \e^{-2.5 (t-0.5)} \indic{t > 0.5} \\
    g_{2 2} (t) &= g_{3 3} (t) = (8 t^2 - 1) \indic{t \le 0.5} + \e^{-(t-0.5)} \indic{t > 0.5},
  \end{align*}
  while inter-interactions are either exciting or inhibiting:
  \begin{align*}
    g_{1 2} (t) &= \e^{-10 (t-1)^2} \\
    g_{1 3} (t) &= -0.6 \e^{-3 t^2} - 0.4 \e^{-3 (t-1)^2} \\
    g_{2 1} (t) &= 2^{-5t} \\
    g_{2 3} (t) &= -\e^{-2 (t-3)^2} \\
    g_{3 1} (t) &= -\e^{-5 (t-2)^2} \\
    g_{3 2} (t) &= (1 + \cos(\pi t)) \e^{-t}/2.
  \end{align*}
  
  \begin{figure}[ht]
    \center
    \vspace*{-1em}
    \includegraphics[width=.5\textwidth]{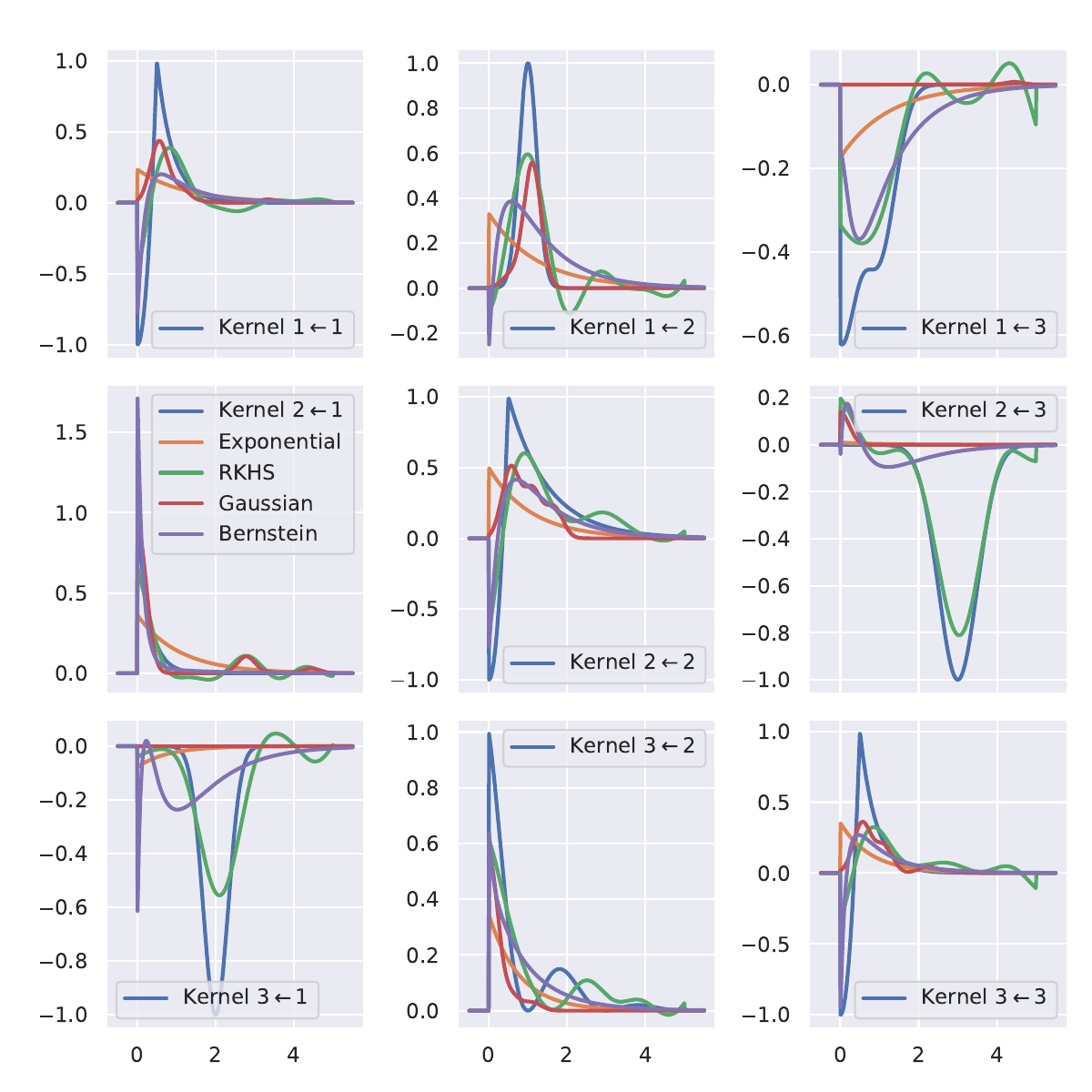}
    \vspace*{-2em}
    \caption{Example of estimations with horizon $T = 2000$.}
    \label{img:toy_estimation}
  \end{figure}
  
  The process is simulated thanks to the thinning method \citep{ogata_lewis_1981} with a burn-in period ensuring stationarity and a sufficiently large time window,
  then estimations are performed successively for a growing horizon \(T \in [250, 500, 1000, 2000]\) of the same observed trajectory (see \cref{img:toy_err,img:toy_loglik}).
  
  Methods compared are (see estimation examples in \cref{img:toy_estimation}):
  \begin{description}
    \item[Exponential:] the mainly used parametric model, for which triggering kernels are supposed to be exponential functions \citep{bonnet_inference_2023}.
    It allows for estimating exciting or inhibiting interactions;
    \item[RKHS:] our proposition (with a regularization coefficient \(\eta\)) based on the Gaussian kernel (parametrized by \(\gamma\)) and a support bound \(A = 5\);
    \item[Bernstein:] nonparametric estimation, for which triggering kernels are represented as a sum of exponential functions \citep{lemonnier_nonparametric_2014}:
    \(g_{j \ell}(t) = \sum_{u=1}^U \alpha_u^{(j \ell)} \e^{-\gamma u t}\) (with \(U = 10\)) and a quadratic penalty on the coefficients \(\alpha_u^{(j \ell)}\) controlled by the parameter \(\eta\) ;
    \item[Gaussian:] nonparametric estimation, for which triggering kernels are represented as a sum of Gaussian functions \citep{xu_learning_2016}:
    \(g_{j \ell}(t) = \sum_{u=1}^U \alpha_u^{(j \ell)} \e^{-\gamma (t - t_u)^2}\) (with \(U = 10\) and a regular grid on \([0, A]\)) and a quadratic penalty on the coefficients \(\alpha_u^{(j \ell)}\) controlled by the parameter \(\eta\).
  \end{description}
  Since the closest methods to ours are based on minimizing the negative log-likelihood, we only include our approach with the criterion \(L\).
  Numerical optimization is performed thanks to the L-BFGS-B method implemented in SciPy \citep{virtanen_scipy_2020}
  on a personal computer.
  In addition, let us remark that the method by \citet{lemonnier_nonparametric_2014} minimizes an approximation of the negative log-likelihood
  and that the one by \citet{xu_learning_2016} is restricted to exciting interactions (\(\alpha_u^{(j \ell)} \ge 0\)).
  
  All three nonparametric methods are tuned by parameters \(\gamma\) and \(\eta\), respectively chosen on the grids \([1, 10, 100]\) and \([0.1, 1, 10, 100]\) to maximize the log-likelihood computed on an independent validation trajectory (with same distribution as the training one).
  The numerical results presented below are computed on an independent test trajectory,
  and this whole procedure is repeated 10 times to produce the statistics (mean and 95\% confidence interval) reported in \cref{img:toy_app,img:toy_err,img:toy_loglik}.

  \begin{figure}[ht]
    \center
    \includegraphics[width=.5\textwidth]{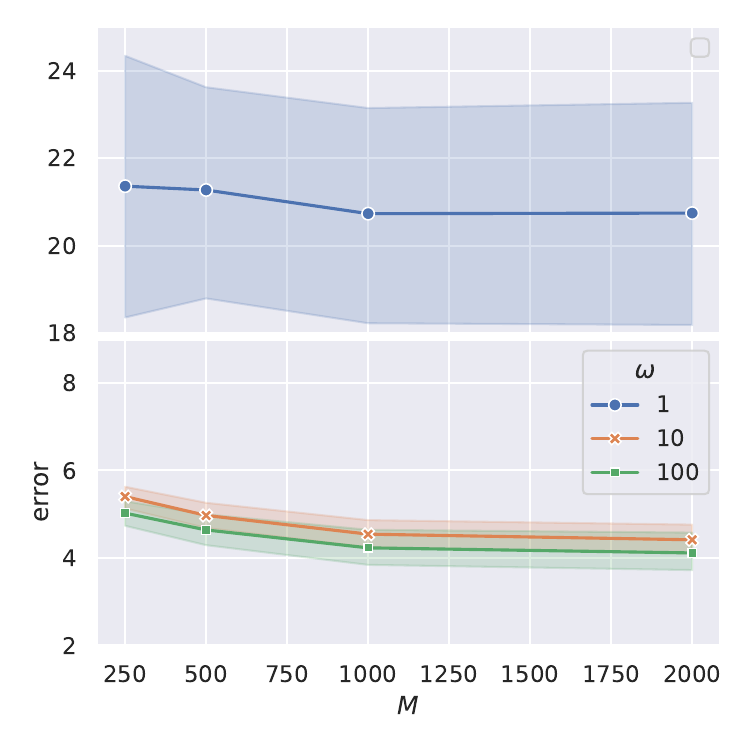}
    \caption{Approximation error of true kernels with respect to the hyperparameters $\omega$ and $M$.}
    \label{img:toy_app}
  \end{figure}
  
  As a first numerical experiment, \cref{img:toy_app} depicts the
  the smallest (over the grids of parameters \(\eta\) and \(\gamma\))
  sum of \(L^1\)-errors between the true triggering kernels and the estimation provided by our method \textbf{RKHS},
  for a horizon \(T = 1000\) and
  for different values of approximation parameters \(\omega\) and \(M\).
  As suggested by \cref{prop:approximation_mle}, the bigger \(\omega\) and the bigger \(M\), the lower the estimation error.
  This shows that the considered approximations are not statistically harmful when parameters are well chosen.
  For the subsequent numerical analysis, the values chosen are \(\omega = 100\) and \(M = \max(1000, 2 \max(N_T^{(1)}, N_T^{(2)}, N_T^{(3)}))\).

  \begin{figure}[ht]
    \center
    \includegraphics[width=.5\textwidth]{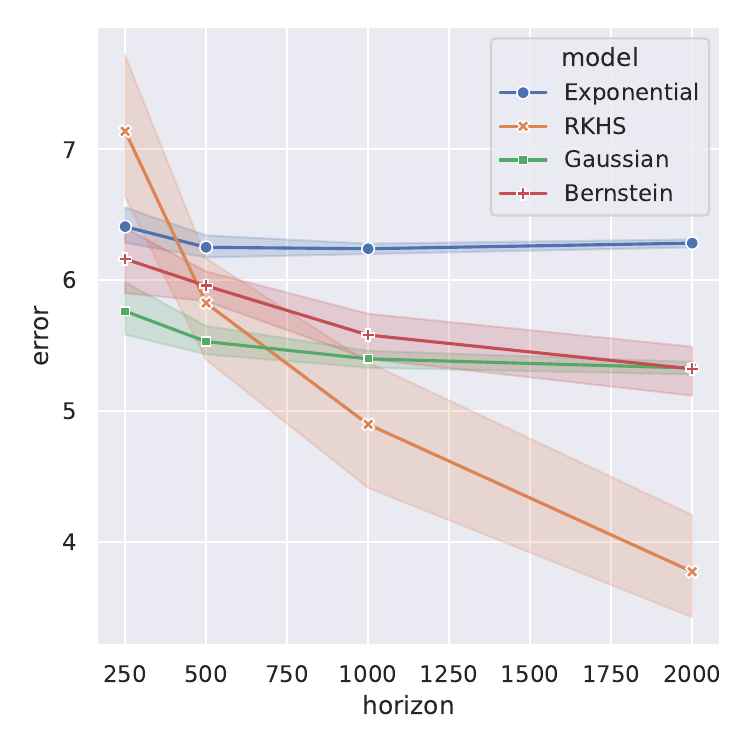}
    \caption{Approximation error of true kernels with respect to the horizon $T$.}
    \label{img:toy_err}
  \end{figure}

  \begin{figure}[ht]
    \center
    \includegraphics[width=.5\textwidth]{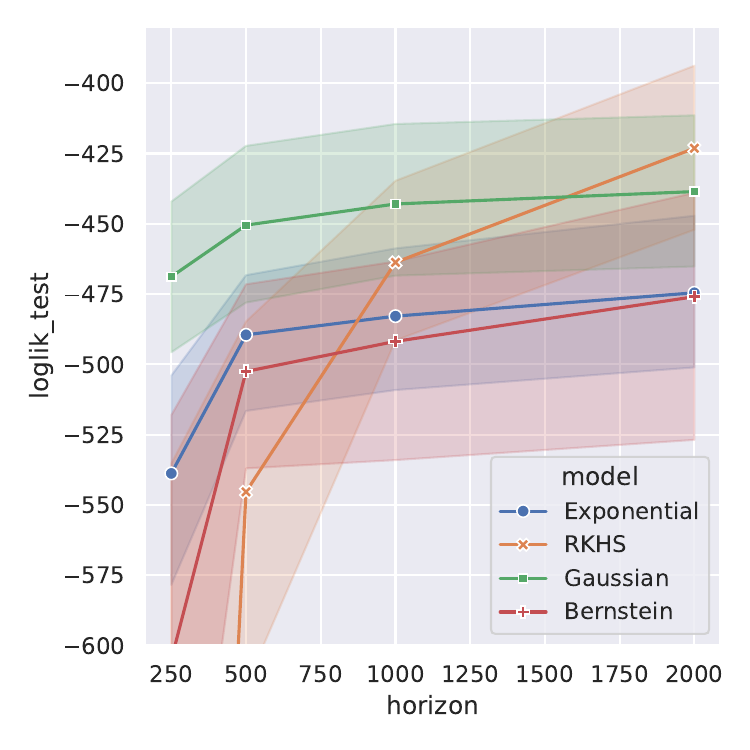}
    \caption{Test log-likelihood with respect to the horizon $T$.}
    \label{img:toy_loglik}
  \end{figure}
  
  Now, \cref{img:toy_err,img:toy_loglik} display, for the considered methods,
  respectively the sum of \(L^1\)-errors and the test log-likelihood,
  for the best model according to the validation log-likelihood.
  It appears clearly that competing methods perform poorly in estimating the chosen triggering kernels.
  Indeed, while able to infer inhibiting effects, \textbf{Exponential} is based on a misspecified model in view of the interaction functions to estimate, exhibiting non-exponential shapes.
  The same reasoning is true for \textbf{Gaussian} since it is designed only for exciting interactions.
  At last, \textbf{Bernstein} suffers from the approximated log-likelihood it optimizes, as well as the kind of functions it is able to approximate.
  In particular, \cref{img:toy_estimation} illustrates that it is able to recover auto-interactions with a refractory period but not cross-interactions.
  Let us remark that for \textbf{Gaussian} and \textbf{Bernstein}, augmenting the size \(U\) of the linear combination does not help because methods are limited by intrinsic misspecifications. 
  Overall, an \rkhs based method able to handle complex kernel functions, in particular that combine exciting and inhibiting interactions, with immediate or delayed effects, seems to be the best option.

\subsection{Neuronal data}
We illustrate our procedure on a neuronal dataset described in \citep{petersen_lognormal_2016,radosevic_decoupling_2019}, then analyzed in \citep{bonnet_inference_2023} with
\textbf{Exponential}
to study the neuronal activity of a red-eared turtle.
The full dataset contains 10 recordings of 40ms for $250$ neurons, that we preprocess as follows.
In order to work with a process that is almost stationary, we only consider the events that took place on the interval $[11, 24]$ms, using a similar setting as in \citep{bonnet_inference_2023}.
This procedure allows in particular to remove the effects of external stimuli that are performed before each recording.
For the sake of clarity when displaying interaction functions, we focus on a small subnetwork of five neurons, chosen randomly among the neurons with a large enough number of spikes (at least $500$ spikes in the concatenation of all recordings).
Parameters \(\gamma\) and \(\eta\) are chosen to maximize the likelihood on a validation trajectory obtained by a randomized concatenation of half of the recordings.

  \begin{figure*}[ht]
    \center
    \includegraphics[width=.8\textwidth]{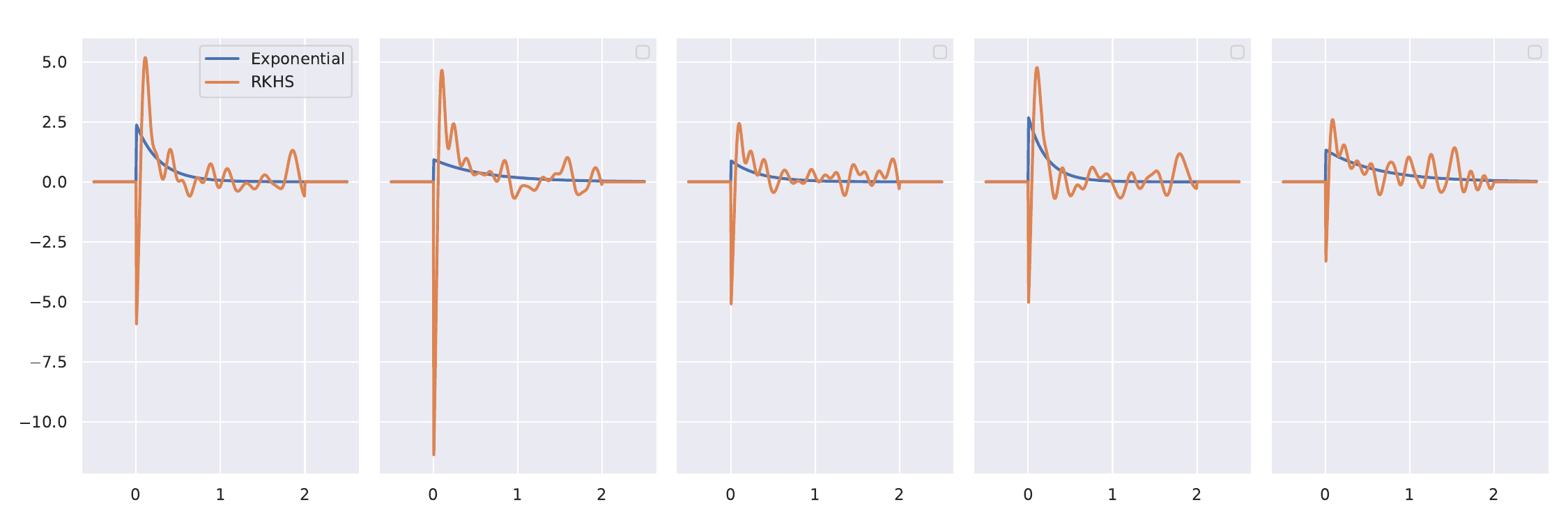}
    \caption{Auto-interaction functions learned on the neuronal subnetwork.}
    \label{img:neuron_diag}
  \end{figure*}

The five estimated auto-interaction functions are displayed in \cref{img:neuron_diag},
with \textbf{Exponential} as a reference.
The estimation provided by our method is consistent with current knowledge of neurons' behaviour, exhibiting first a large negative peak corresponding to the refractory period followed by a self-exciting effect.
As expected,
\textbf{Exponential}
cannot detect such a structure and only shows the self-exciting effect, which is probably underestimated due to the lack of refractory period modeling (see for instance the second graph where the exponential method only estimates a small self-interaction, which corresponds to an averaged version of
sharp negative and positive peaks).
We provide in \cref{img:neuron_all} of \cref{app:numerics}  
all 25 interaction functions
obtained with the four benchmarked methods.
We see that the two methods that are able to detect accurately the refractory period are \textbf{Bernstein} and \textbf{RKHS}, which is consistent with the results obtained on synthetic data. 
Interestingly, many cross-interactions appear to be low compared to auto-interactions.
Besides this qualitative assessment, \cref{tab:test_loglik} of \cref{app:numerics} gives the log-likelihood scores computed on a test trajectory obtained by a randomized concatenation of half of the recordings.
It shows that \textbf{RKHS} has the highest score, which leads to believe that it is more suited than competitors.
  

  \section{Discussion}
    \label{sec:discussion}
In this paper, we filled a gap in nonparametric inference of
Hawkes processes by introducing a batch learning method based on \rkhss, which is able to handle combinations of exciting and inhibiting effects.

On the one hand, as a kernel method, the proposed approach has for drawback its computational complexity, which, even if lower than that of neural networks based techniques, is higher than parametric or equivalent models.
This limitation could be lifted thanks to kernel approximation techniques (such as the Nyström method or random Fourier features \citep{yang_nystrom_2012})
or discretization schemes \citep{staerman_fadin_2023}.

On the other hand, we believe that the proposed method can serve as a stepping stone for many future developments,
including sparse learning of interaction functions,
the use of operator-valued kernels \citep{micchelli_learning_2005,paulsen_introduction_2016,brault_random_2016}
and extension to inference of spatiotemporal Hawkes processes \citep{li_multivariate_2024,siviero_flexible_2024}.


	\section*{Acknowledgement}
    The authors are grateful to the anonymous referees for their very relevant comments.

	\bibliography{maxime}
	
	\appendix
	
	\section{Proof of \cref{prop:duality}}
    \label{app:proof1}
  The optimization problem of interest can be written:
  \begin{opb*}
    \minimize{\theta \in \Theta,\, \atop (\xi_j)_{1 \le j \le d} \in (\R^M)^d}
    & \sum_{j=1}^d \left[ \frac{T}{M}  \| \xi_j \|^2 - 2 N_T^{(j)} \mu_j - 2 \sum_{\ell=1}^d \sum_{1 \le n \le N_T^{(j)} \atop 1 \le i \le N_T^{(\ell)}} \left( h_{j \ell} \left(T_n^{(j)} - T_i^{(\ell)} \right) + b_{j \ell} \right) \indic{0 < T_n^{(j)} - T_i^{(\ell)} \le A} \right] \\
    &+ \frac{\eta}{2} \sum_{1 \le j, \ell \le d} \|h_{j \ell}\|_\H^2 \\
    \st & \begin{starrayd}
      \forall (j, \ell) \in \llb 1, d \rrb^2\\
      \mu_j \ge 0
      &: \delta_j \ge 0\\
      \forall n \in \llb 1, P \rrb, ~ h_{j \ell}(x_n) + b_{j \ell} \ge 0
      &: {\beta_n^{(j \ell)}} \ge 0 \\
      \forall n \in \llb 1, M \rrb, ~ \mu_j + \sum_{\ell=1}^d \sum_{i=1}^{N_T^{(\ell)}} \left( h_{j \ell} \left(\tau_n - T_i^{(\ell)} \right) + b_{j \ell} \right) \indic{0 < \tau_n - T_i^{(\ell)} \le A} = \xi_{j n}
      &: {\alpha_n^{(j)}} \in \R,
    \end{starrayd}
  \end{opb*}
  where Lagrangian variables are indicated on the right.
  
  This problem is convex (in that it has a quadratic objective with affine constraints)
  and statisfies Slater's constraint qualification.
  As a consequence, the dual problem is attained for some dual variables \(\delta \in \R_+^d\), \({\beta^{(j \ell)}} \in \R_+^P\) and \({\alpha^{(j)}} \in \R^M\),
  and Karush-Kuhn-Tucker conditions hold for solutions \(\theta\) and \((\xi_j)_{1 \le j \le d}\),
  and \(\delta\), \({\beta^{(j \ell)}}\) and \({\alpha^{(j)}}\).
  In particular, by stationarity, for all \((j, \ell) \in \llb 1, d \rrb^2\):
  \begin{align*}
    h_{j \ell}
    &= \eta^{-1} \left[
      2 q_{j \ell}
      + \sum_{n=1}^P {\beta_n^{(j \ell)}} k \left( \cdot, x_n \right)
      - \sum_{n=1}^M {\alpha_n^{(j)}} r_{\ell n}
    \right],
  \end{align*}
  where
  \[
    \begin{cases}
      q_{j \ell} = \sum_{1 \le n \le N_T^{(j)} \atop 1 \le i \le N_T^{(\ell)}} k \left(\cdot, T_n^{(j)} - T_i^{(\ell)} \right) \indic{0 < T_n^{(j)} - T_i^{(\ell)} \le A} \\
      r_{\ell n} = \sum_{i=1}^{N_T^{(\ell)}} k \left( \cdot, \tau_n - T_i^{(\ell)} \right) \indic{0 < \tau_n - T_i^{(\ell)} \le A}.
    \end{cases}
  \]
  
  As a byproduct, the dual problem reads:
  \begin{opb*}
    \maximize{({\alpha^{(j)}})_{1 \le j \le d} \in (\R^M)^d,\, ({\beta^{(j \ell)}})_{1 \le j \le d} \in (\R^P)^{d \times d}}
    & -\sum_{j=1}^d \left[
      {\alpha^{(j)}}^\top \tilde K {\alpha^{(j)}} - \frac{2}{\eta} {\alpha^{(j)}}^\top \sum_{\ell=1}^d v_{j \ell}
    \right] \\
    &- \frac{1}{2 \eta} \sum_{1 \le j, \ell \le d} \left[
      {\beta^{(j \ell)}}^\top R {\beta^{(j \ell)}} + 2 {\beta^{(j \ell)}} ^\top \left( 2 u_{j \ell} - W_\ell {\alpha^{(j)}} \right)
    \right] \\
    \st & \begin{starray}
      \forall (j, \ell) \in \llb 1, d \rrb^2, {\beta^{(j \ell)}} \succcurlyeq 0 \\
      \forall j \in \llb 1, d \rrb, 2 N_T^{(j)} \le \ind^\top {\alpha^{(j)}} \\
      \forall (j, \ell) \in \llb 1, d \rrb^2, {\alpha^{(j)}}^\top z_\ell - \ind^\top {\beta^{(j \ell)}} = 2 \sum_{1 \le n \le N_T^{(j)} \atop 1 \le i \le N_T^{(\ell)}} \ind_{0 < T_n^{(j)} - T_i^{(\ell)} \le A},
    \end{starray}
  \end{opb*}
  where
  \(R = \left( k \left( x_n, x_{n'} \right) \right)_{1 \le n, n' \le P}\),
  \(\tilde K = \frac{M}{4T} I_M + \frac{1}{2\eta} \sum_{\ell=1}^d K_{\ell}\) (with \(I_M\) the identity matrix of size \(M\)),
  and for all \(\ell \in \llb 1, d \rrb\),
  \(K_\ell = \left( \iH{ r_{\ell n} }{ r_{\ell n'} } \right)_{1 \le n, n' \le M}\),
  \(W_\ell = \left( r_{\ell n'}(x_n) \right)_{1 \le n, n' \le M}\),
  \(z_\ell = \left( \sum_{i=1}^{N_T^{(\ell)}} \ind_{0 < \tau_n - T_i^{(\ell)} \le A} \right)_{1 \le n \le M}\),
  and for all \(j \in \llb 1, d \rrb\),
  \(u_{j \ell} = \left( q_{j \ell}(x_n) \right)_{1 \le n \le P}\), and
  \(v_{j \ell} = \left( \iH{ q_{j \ell} }{ r_{\ell n} } \right)_{1 \le n \le M}\).
  

	\section{Proof of \cref{prop:representer_thm_mle}}
    \label{app:proof2}
First of all, conditional intensities read:
\begin{align*}
  \forall j \in \llb 1, d \rrb,
  \forall t \in \R_+:
  \quad
  \lambda^{(j)}(t)
  &= \varphi \left( \mu_j + \sum_{\ell=1}^d \sum_{i = 1}^{N^{(\ell)}_t} g_{j \ell} \left(t - T_i^{(\ell)} \right) \right) \\
  &= \varphi \left( \mu_j + \sum_{\ell=1}^d \sum_{i = 1}^{N^{(\ell)}_T} \left( h_{j \ell} \left(t - T_i^{(\ell)} \right) + b_{j \ell} \right) \indic{0 < t - T_i^{(\ell)} \le A} \right).
\end{align*}
Then, the approximation \(L_0\) is \citep{lemonnier_nonparametric_2014}:
\[
  \forall \theta \in \Theta^+,
  \quad
  L_0(\theta) = \sum_{j=1}^d \left[ \int_0^T \left\{ \mu_j + \sum_{\ell=1}^d \sum_{i = 1}^{N^{(\ell)}_T} g_{j \ell} \left(t - T_i^{(\ell)} \right) \right\} \dd t - \sum_{n=1}^{N^{(j)}_T} \log \left( \lambda^{(j)} \left( T_n^{(j)} \right) \right) \right],
\]
which is a lower bound of \(L\) when \(\varphi\) is the ReLU function.
Now, let \(F : \Theta \to \R \cup \{\infty\}\) denote the objective function:
for all \(\theta \in \Theta^+\),
\begin{align*}
  F(\theta)
  &= L_0(\theta) + \frac{\eta}{2} \sum_{1 \le j, \ell \le d} \|h_{j \ell}\|_\H^2\\
  &= \sum_{j=1}^d \left[ \mu_j T + \sum_{\ell=1}^d \sum_{i = 1}^{N^{(\ell)}_T} \int_0^T h_{j \ell} \left(t - T_i^{(\ell)} \right) \indic{0 < t - T_i^{(\ell)} \le A} \dd t
  + \sum_{\ell=1}^d b_{j \ell} B^{(\ell)} \right. \\
  &- \left. \sum_{n=1}^{N^{(j)}_T} \log \left( \mu_j + \sum_{\ell=1}^d \sum_{i = 1}^{N^{(\ell)}_T} \left( h_{j \ell} \left(T_n^{(j)} - T_i^{(\ell)} \right) + b_{j \ell} \right) \indic{0 < T_n^{(j)} - T_i^{(\ell)} \le A} \right) \right]
  + \frac{\eta}{2} \sum_{1 \le j, \ell \le d} \|h_{j \ell}\|_\H^2,
\end{align*}
where \(B^{(\ell)} = \sum_{i = 1}^{N^{(\ell)}_T} \min \left(T-T_i^{(\ell)}, A \right)\).

  Let \(\theta\) be a minimizer of \(F\).
  By the reproducing property, we have:
  
  \begin{align*}
    F(\theta)
    &= \sum_{j=1}^d \left[ \mu_j T + \sum_{\ell=1}^d \sum_{i = 1}^{N^{(\ell)}_T} \int_0^T \iH{ h_{j \ell} }{ k\left(\cdot, t - T_i^{(\ell)} \right) \indic{0 < t - T_i^{(\ell)} \le A} } \dd t
    + \sum_{\ell=1}^d b_{j \ell} B^{(\ell)} \right. \\
    &- \left. \sum_{n=1}^{N^{(j)}_T} \log \left( \mu_j + \sum_{\ell=1}^d \sum_{i = 1}^{N^{(\ell)}_T} \left( \iH{ h_{j \ell} }{ k\left(\cdot, T_n^{(j)} - T_i^{(\ell)} \right) \indic{0 < T_n^{(j)} - T_i^{(\ell)} \le A} } + b_{j \ell} \indic{0 < T_n^{(j)} - T_i^{(\ell)} \le A} \right) \right) \right] \\
    &+ \frac{\eta}{2} \sum_{1 \le j, \ell \le d} \|h_{j \ell}\|_\H^2 \\
    &= \sum_{j=1}^d \left[ \mu_j T + \sum_{\ell=1}^d \int_0^T \iH{ h_{j \ell} }{ \sum_{i = 1}^{N^{(\ell)}_T} k\left(\cdot, t - T_i^{(\ell)} \right) \indic{0 < t - T_i^{(\ell)} \le A} } \dd t
    + \sum_{\ell=1}^d b_{j \ell} B^{(\ell)} \right. \\
    &- \left. \sum_{n=1}^{N^{(j)}_T} \log \left( \mu_j + \sum_{\ell=1}^d \left( \iH{ h_{j \ell} }{ \sum_{i = 1}^{N^{(\ell)}_T} k\left(\cdot, T_n^{(j)} - T_i^{(\ell)} \right) \indic{0 < T_n^{(j)} - T_i^{(\ell)} \le A} } + b_{j \ell} \sum_{i = 1}^{N^{(\ell)}_T} \indic{0 < T_n^{(j)} - T_i^{(\ell)} \le A} \right) \right) \right] \\
    &+ \frac{\eta}{2} \sum_{1 \le j, \ell \le d} \|h_{j \ell}\|_\H^2
  \end{align*}
  Now, for all \(\ell \in \llb 1, d \rrb\), \(L_\ell : h \in \H \mapsto \int_0^T \iH{ h }{ \sum_{i = 1}^{N^{(\ell)}_T} k\left(\cdot, t - T_i^{(\ell)} \right) \indic{0 < t - T_i^{(\ell)} \le A} } \dd t\)
  is a continuous linear operator from \(\H\) to \(\R\)
  so by the Riesz representation theorem, there exists \(r_\ell \in \H\) such that for all \(h \in \H\), \(L_\ell(h) = \iH{h}{r_\ell}\). 
  Moreover, by the reproducing property, for all \(x \in \R\):
  \begin{align*}
    r_\ell(x)
    = \iH{r_\ell}{k(\cdot, x)} 
    = L_\ell \left( k(\cdot, x) \right)
    &= \int_0^T \iH{ k(\cdot, x) }{ \sum_{i = 1}^{N^{(\ell)}_T} k\left(\cdot, t - T_i^{(\ell)} \right) \indic{0 < t - T_i^{(\ell)} \le A} } \dd t \\
    &= \int_0^T \sum_{i = 1}^{N^{(\ell)}_T} k\left(x, t - T_i^{(\ell)} \right) \indic{0 < t - T_i^{(\ell)} \le A} \dd t.     
  \end{align*}
  Then, denoting for all \(n \in \llb 1, N_T^{(j)} \rrb\):
  \[
    q_{n j \ell} = \sum_{i = 1}^{N^{(\ell)}_T} k\left(\cdot, T_n^{(j)} - T_i^{(\ell)} \right) \indic{0 < T_n^{(j)} - T_i^{(\ell)} \le A},
  \]
  the objective function reads:
  \begin{align*}
    F(\theta)
    &= \sum_{j=1}^d \left[ \mu_j T + \sum_{\ell=1}^d \iH{ h_{j \ell} }{ r_\ell }
    + \sum_{\ell=1}^d b_{j \ell} B^{(\ell)} \right. \\
    &- \left. \sum_{n=1}^{N^{(j)}_T} \log \left( \mu_j + \sum_{\ell=1}^d \left( \iH{ h_{j \ell} }{ q_{n j \ell} } + b_{j \ell} \sum_{i = 1}^{N^{(\ell)}_T} \indic{0 < T_n^{(j)} - T_i^{(\ell)} \le A} \right) \right) \right]
    + \frac{\eta}{2} \sum_{1 \le j, \ell \le d} \|h_{j \ell}\|_\H^2.
  \end{align*}
  Thus, still for all \((j, \ell) \in \llb 1, d \rrb^2\), 
  decomposing \(\H\) as the direct sum of the vector space \(\mathcal V_{j \ell}\) spanned by
  \(\left\{ r_\ell \right\} \cup \left\{ q_{u j \ell}, u \in \llb 1, N_T^{(j)} \rrb \right\}\)
  and its orthogonal subspace \(\mathcal V_{j \ell}^\perp\), we can write \(h_{j \ell} = h_{j \ell}^\parallel + h_{j \ell}^\perp\)
  with \(h_{j \ell}^\parallel \in \mathcal V_{j \ell}\) and \(h_{j \ell} \in \mathcal V_{j \ell}^\perp\).
  Then, for \(\theta^\parallel = \left( (\mu_j)_{1 \le j \le d}, (h_{j \ell}^\parallel)_{1 \le j, \ell \le d}, (b_{j \ell})_{1 \le j, \ell \le d} \right)\),
  we have \(\theta^\parallel \in \Theta\) and
  \begin{align*}
    \min_{\theta' \in \Theta} F(\theta')
    &= F(\theta)
    = L_0(\theta) + \frac{\eta}{2} \sum_{1 \le j, \ell \le d} \|h_{j \ell}\|_\H^2
    = L_0(\theta^\parallel) + \frac{\eta}{2} \sum_{1 \le j, \ell \le d} \|h_{j \ell}\|_\H^2 \\
    &\ge L_0(\theta^\parallel) + \frac{\eta}{2} \sum_{1 \le j, \ell \le d} \|h_{j \ell}^\parallel\|_\H^2
    = F(\theta^\parallel),
  \end{align*}
  by the Pythagorean theorem.
  This proves that \(\theta^\parallel\) is a minimizer of \(F\), which has the desired form:
  \[
    \forall (j, \ell) \in \llb 1, d \rrb^2:
    \quad
    h_{j \ell}^\parallel = \alpha^{(j \ell)}_0 r_\ell + \sum_{u=1}^{N_T^{(j)}} \alpha^{(j \ell)}_u q_{u j \ell},
  \]
  for some \((N_T^{(j)}+1)\) real values \(\alpha^{(j \ell)}_0, \dots, \alpha^{(j \ell)}_{N_T^{(j)}}\).

	\section{Proof of \cref{prop:representer_thm_ls}}
    \label{app:proof3}
Following \citet{lemonnier_nonparametric_2014}, we consider the approximation \(J_0\):
\[
  \forall \theta \in \Theta^+,
  \quad
  J_0(\theta) = \sum_{j=1}^d \left[ \int_0^T \left\{ \mu_j + \sum_{\ell=1}^d \sum_{i = 1}^{N^{(\ell)}_T} g_{j \ell} \left(t - T_i^{(\ell)} \right) \right\}^2 \dd t - 2 \sum_{n=1}^{N^{(j)}_T} \lambda^{(j)} \left( T_n^{(j)} \right) \right],
\]
which is an upper bound of \(J\) when \(\varphi\) is the ReLU function.
Now, it is enough to remark that for any candidate \(\theta \in \Theta^+\) and all \(j \in \llb 1, d \rrb\),
\begin{align} \label{equ:cs_ls}
  &\int_0^T \left\{ \mu_j + \sum_{\ell=1}^d \sum_{i = 1}^{N^{(\ell)}_T} g_{j \ell} \left(t - T_i^{(\ell)} \right) \right\}^2 \dd t \notag \\
  &= \int_0^T \left\{ \mu_j + \sum_{\ell=1}^d \sum_{i = 1}^{N^{(\ell)}_T} \left( h_{j \ell} \left(t - T_i^{(\ell)} \right) + b_{j \ell} \right) \indic{0 < t - T_i^{(\ell)} \le A} \right\}^2 \dd t \notag \\
  &= \mu_j^2 T
  + \int_0^T \left\{ \sum_{\ell=1}^d \left( \iH{h_{j \ell}}{\sum_{i=1}^{N_T^{(\ell)}} k(\cdot, t-T_i^{(\ell)}) \indic{0 < t - T_i^{(\ell)} \le A}} + \sum_{i = 1}^{N^{(\ell)}_T} b_{j \ell} \indic{0 < t - T_i^{(\ell)} \le A} \right) \right\}^2 \dd t \notag \\
  &+ 2 \mu_j \left( \sum_{\ell=1}^d \int_0^T \iH{ h_{j \ell} }{ \sum_{i = 1}^{N^{(\ell)}_T} k\left(\cdot, t - T_i^{(\ell)} \right) \indic{0 < t - T_i^{(\ell)} \le A} } \dd t
  + \sum_{\ell=1}^d b_{j \ell} B^{(\ell)} \right) \notag \\
  &\le \mu_j^2 T
  + \sum_{1 \le \ell, \ell' \le d} C_{\ell \ell'} \|h_{j \ell}\|_\H \|h_{j \ell'}\|_\H
  + \int_0^T \left( \sum_{\ell=1}^d b_{j \ell} \sum_{i = 1}^{N^{(\ell)}_T} \indic{0 < t - T_i^{(\ell)} \le A} \right)^2 \dd t \notag \\
  &+ 2 \sum_{1 \le \ell, \ell' \le d} D_{\ell \ell'} |b_{j \ell'}| \|h_{j \ell}\|_\H
  + 2 \mu_j \sum_{\ell=1}^d \int_0^T \iH{h_{j \ell}}{\sum_{i=1}^{N_T^{(\ell)}} k(\cdot, t-T_i^{(\ell)}) \indic{0 < t - T_i^{(\ell)} \le A}} \dd t \notag \\
  &+ 2 \mu_j \sum_{\ell=1}^d b_{j \ell} B^{(\ell)},
\end{align}
by Cauchy–Schwarz inequality,
where
\[
  C_{\ell \ell'} = \int_0^T \left \| \sum_{i=1}^{N_T^{(\ell)}} k(\cdot, t-T_i^{(\ell)}) \indic{0 < t - T_i^{(\ell)} \le A} \right\|_\H \left \| \sum_{i=1}^{N_T^{(\ell)}} k(\cdot, t-T_i^{(\ell')}) \indic{0 < t - T_i^{(\ell')} \le A} \right\|_\H \dd t
  \ge 0,
\]
and
\[
  D_{\ell \ell'}
  = \int_0^T \left \| \left( \sum_{i=1}^{N_T^{(\ell)}} k(\cdot, t-T_i^{(\ell)}) \indic{0 < t - T_i^{(\ell)} \le A} \right) \left( \sum_{i = 1}^{N^{(\ell')}_T} \indic{0 < t - T_i^{(\ell')} \le A} \right) \right\|_\H \dd t
  \ge 0.
\]
The upper bound \(J^+(\theta)\) consists in replacing in \(J_0(\theta)\) the left-hand side of \cref{equ:cs_ls} by its right-hand side.
The objective function then becomes:
for all \(\theta \in \Theta^+\),
\begin{align*}
  F(\theta)
  &= J^+(\theta) + \frac{\eta}{2} \sum_{1 \le j, \ell \le d} \|h_{j \ell}\|_\H^2\\
  &= \sum_{j=1}^d \left[ \mu_j^2 T + 2 \mu_j \sum_{\ell=1}^d b_{j \ell} B^{(\ell)}
  + \int_0^T \left( \sum_{\ell=1}^d b_{j \ell} \sum_{i = 1}^{N^{(\ell)}_T} \indic{0 < t - T_i^{(\ell)} \le A} \right)^2 \dd t \right. \\
  &+ 2 \mu_j \sum_{\ell=1}^d \int_0^T \iH{h_{j \ell}}{\sum_{i=1}^{N_T^{(\ell)}} k(\cdot, t-T_i^{(\ell)}) \indic{0 < t - T_i^{(\ell)} \le A}} \dd t
  + \sum_{1 \le \ell, \ell' \le d} \left( C_{\ell \ell'} \|h_{j \ell'}\|_\H + 2 D_{\ell \ell'} |b_{j \ell'}|\right) \|h_{j \ell}\|_\H \\
  &- \left. 2 \sum_{n=1}^{N^{(j)}_T} \varphi \left( \mu_j + \sum_{\ell=1}^d \left( \iH{ h_{j \ell} }{ \sum_{i = 1}^{N^{(\ell)}_T} k\left(\cdot, T_n^{(j)} - T_i^{(\ell)} \right) \indic{0 < T_n^{(j)} - T_i^{(\ell)} \le A} } + b_{j \ell} \sum_{i = 1}^{N^{(\ell)}_T} \indic{0 < T_n^{(j)} - T_i^{(\ell)} \le A} \right) \right) \right] \\
  &+ \frac{\eta}{2} \sum_{1 \le j, \ell \le d} \|h_{j \ell}\|_\H^2 \\
  &= \sum_{j=1}^d \left[ \mu_j^2 T + 2 \mu_j \sum_{\ell=1}^d b_{j \ell} B^{(\ell)}
  + \int_0^T \left( \sum_{\ell=1}^d b_{j \ell} \sum_{i = 1}^{N^{(\ell)}_T} \indic{0 < t - T_i^{(\ell)} \le A} \right)^2 \dd t \right. \\
  &+ 2 \mu_j \sum_{\ell=1}^d \iH{h_{j \ell}}{r_\ell}
  + \sum_{1 \le \ell, \ell' \le d} \left( C_{\ell \ell'} \|h_{j \ell'}\|_\H + 2 D_{\ell \ell'} |b_{j \ell'}|\right) \|h_{j \ell}\|_\H \\
  &- \left. 2 \sum_{n=1}^{N^{(j)}_T} \varphi \left( \mu_j + \sum_{\ell=1}^d \left( \iH{ h_{j \ell} }{ q_{n j \ell} } + b_{j \ell} \sum_{i = 1}^{N^{(\ell)}_T} \indic{0 < T_n^{(j)} - T_i^{(\ell)} \le A} \right) \right) \right]
  + \frac{\eta}{2} \sum_{1 \le j, \ell \le d} \|h_{j \ell}\|_\H^2.
\end{align*}

Now, the rest of the proof is similar to that of \cref{prop:representer_thm_mle}:
\(h_{j \ell}\) appears through \(\|h_{j \ell}\|_\H\) and the same linear terms as in the proof of \cref{prop:representer_thm_mle}, so the Pythagorean theorem still applies and the final form of \(h_{j \ell}\) is the same.

	\section{Implementation details}
	  \label{app:implementation}
From \cref{sec:implementation}, for all \(\theta \in \Theta_\parallel^+\), for each \((j, \ell) \in \llb 1, d \rrb^2\),
there exist parameters \(\left\{ \alpha_u^{(j \ell)} \right\}_{0 \le u \le N_T^{(j)}}\) such that:
\[
  h_{j \ell} = \sum_{u=0}^{N_T^{(j)}} \alpha_u^{(j \ell)} q_{u j \ell},
\]
where \(q_{0 j \ell} = r_\ell\).
Then
\begin{align*}
  F_M(\theta)
  &= \sum_{j=1}^d \left[ \frac{T}{M} \sum_{n=1}^M \varphi_1 \left( \mu_j + \sum_{\ell=1}^d \sum_{i = 1}^{N^{(\ell)}_T} \left( h_{j \ell} \left(\tau_n - T_i^{(\ell)} \right) + b_{j \ell} \right) \indic{0 < \tau_n - T_i^{(\ell)} \le A} \right) \right. \\
  &- \left. \sum_{n=1}^{N^{(j)}_T} \varphi_2 \left( \mu_j + \sum_{\ell=1}^d \sum_{i = 1}^{N^{(\ell)}_T} \left( h_{j \ell} \left(T_n^{(j)} - T_i^{(\ell)} \right) + b_{j \ell} \right) \indic{0 < T_n^{(j)} - T_i^{(\ell)} \le A} \right) \right]
  + \frac{\eta}{2} \sum_{1 \le j, \ell \le d} \|h_{j \ell}\|_\H^2 \\
  &= \sum_{j=1}^d \left[ \frac{T}{M} \sum_{n=1}^M \varphi_1 \left( \mu_j + \sum_{\ell=1}^d \sum_{u=0}^{N_T^{(j)}} \alpha_u^{(j \ell)} \iH{ q_{u j \ell} }{ \sum_{i = 1}^{N^{(\ell)}_T} k \left(\cdot, \tau_n - T_i^{(\ell)} \right) \indic{0 < \tau_n - T_i^{(\ell)} \le A} } \right. \right.\\
  &+ \left. \left. \sum_{\ell=1}^d b_{j \ell} \sum_{i = 1}^{N^{(\ell)}_T} \indic{0 < \tau_n - T_i^{(\ell)} \le A} \right) \right. \\
  &- \left. \sum_{n=1}^{N^{(j)}_T} \varphi_2 \left( \mu_j + \sum_{\ell=1}^d \sum_{u=0}^{N_T^{(j)}} \alpha_u^{(j \ell)} \iH{ q_{u j \ell}}{ q_{n j \ell} } + \sum_{\ell=1}^d b_{j \ell} \sum_{i = 1}^{N^{(\ell)}_T} \indic{0 < T_n^{(j)} - T_i^{(\ell)} \le A} \right) \right] \\
  &+ \frac{\eta}{2} \sum_{1 \le j, \ell \le d} \sum_{u=0}^{N_T^{(j)}} \sum_{n=0}^{N_T^{(j)}} \alpha_u^{(j \ell)} \alpha_n^{(j \ell)} \iH{ q_{u j \ell}}{ q_{n j \ell} } \\
  &= \sum_{j=1}^d \left[ \frac{T}{M} \ind ^\top \varphi_1 \left( \mu_j \ind + Q^{(j)} \alpha^{(j)} + B b^{(j)} \right)
  - \ind ^\top \varphi_2 \left( \mu_j \ind + K^{(j)} \alpha^{(j)} + E^{(j)} b^{(j)} \right) \right] \\
  &+ \frac{\eta}{2} \sum_{1 \le j, \ell \le d} { \alpha^{(j \ell)} } ^\top K^{(j \ell)} \alpha^{(j \ell)},
\end{align*}
where
\[
  B = \left( \sum_{i = 1}^{N^{(\ell)}_T} \indic{0 < \tau_n - T_i^{(\ell)} \le A} \right)_{1 \le n \le M \atop 1 \le \ell \le d} \in \R^{M \times d},
\]
and for all \(j \in \llb 1, d \rrb\),
\[
  E^{(j)} = \left( \sum_{i = 1}^{N^{(\ell)}_T} \indic{0 < T_n^{(j)} - T_i^{(\ell)} \le A} \right)_{1 \le n \le N_T^{(j)} \atop 1 \le \ell \le d} \in \R^{N_T^{(j)} \times d},
\]
and
\[
  \alpha^{(j)} =
  \begin{bmatrix}
    \alpha^{(j 1)}\\ \vdots \\ \alpha^{(j d)}
  \end{bmatrix} \in \R^{d (N_T^{(j)}+1)}
  \quad \text{and} \quad
  b^{(j)} =
  \begin{bmatrix}
    b_{j 1}\\ \vdots \\ b_{j d}
  \end{bmatrix} \in \R^d,
\]
are concatenated vectors,
\[
  Q^{(j)} = \left[ Q^{(j 1)} \mid \dots \mid Q^{(j d)} \right] \in \R^{M \times d (N_T^{(j)}+1)}
  \quad \text{and} \quad
  K^{(j)} = \left[ K^{(j 1)}_{1} \mid \dots \mid K^{(j d)}_{1} \right] \in \R^{N_T^{(j)} \times d(N_T^{(j)}+1)},
\]
are concatenated matrices
with, for all \(\ell \in \llb 1, d \rrb\),
\begin{align*}
  Q^{(j \ell)}
  = \left(
      \iH{ q_{u j \ell} }{ \sum_{i = 1}^{N^{(\ell)}_T} k \left(\cdot, \tau_n - T_i^{(\ell)} \right) \indic{0 < \tau_n - T_i^{(\ell)} \le A} }
    \right)_{1 \le n \le M \atop 0 \le u \le N_T^{(j)}},
  \quad
  K^{(j \ell)}
  = \left(
      \iH{ q_{u j \ell}}{ q_{n j \ell} }
    \right)_{0 \le n \le N_T^{(j)} \atop 0 \le u \le N_T^{(j)}},
\end{align*}
and \(K^{(j \ell)}_{1}\) is the submatrix composed of the last \(n\) rows of \(K^{(j \ell)}\).

Now, for all
\(u \in \llb 0, N_T^{(j)} \rrb\),
\[
  q_{u j \ell} = \left\{ \sum_{v = 1}^{N^{(\ell)}_T} \int_0^T k\left(\cdot, t - T_v^{(\ell)} \right) \indic{0 < t - T_v^{(\ell)} \le A} \dd t \right\} \indic{u=0}
  + \left\{ \sum_{v = 1}^{N^{(\ell)}_T} k\left(\cdot, T_u^{(j)} - T_v^{(\ell)} \right) \indic{0 < T_u^{(j)} - T_v^{(\ell)} \le A} \right\} \indic{u \ge 1},
\]
so for all \(n \in \llb 1, M \rrb\),
\begin{align*}
  Q^{(j \ell)}_{n u}
  &= \left\{ \int_0^T s_\ell \left(\tau_n, t \right) \dd t \right\} \indic{u=0}
  + s_\ell \left(\tau_n, T_u^{(j)} \right) \indic{u\ge1},
\end{align*}
and for all \(n \in \llb 0, N_T^{(j)} \rrb\),
\begin{align*}
  K^{(j \ell)}_{n u}
  &= \left\{ \int_0^T \int_0^T s_\ell \left(t, t' \right) \dd t \dd t' \right\} \indic{u=0} \indic{n=0}
  + \left\{ \int_0^T s_\ell \left(t, T_u^{(j)} \right) \dd t \right\} \indic{u\ge1} \indic{n=0} \\
  &+ \left\{ \int_0^T s_\ell \left(T_n^{(j)}, t \right) \dd t \right\} \indic{u=0} \indic{n\ge1} 
  + s_\ell \left(T_n^{(j)}, T_u^{(j)} \right) \indic{u\ge1} \indic{n\ge1}, \\
\end{align*}
where
\[
  s_\ell : (x, x') \in \R^2 \mapsto \sum_{1 \le i, v \le N^{(\ell)}_T} k \left(x - T_i^{(\ell)}, x' - T_v^{(\ell)} \right) \indic{0 < x - T_i^{(\ell)} \le A} \indic{0 < x' - T_v^{(\ell)} \le A}.
\]

\begin{remark}
  In the particular case of the Gaussian kernel, \(k : (x, x') \in \R^2 \mapsto \e^{-\gamma (x-x')^2}\) (where \(\gamma > 0\)),
  for all \(\ell \in \llb 1, d \rrb^2\),
  \begin{align*}
    \forall x \in \R:
    \quad
    \int_0^T s_\ell(x, t) \dd t
    = \frac{\sqrt \pi}{2} \sum_{1 \le i, v \le N^{(\ell)}_T}
    &\left[
      \erf_\gamma \left( \min \left( T-T_v^{(\ell)}, A \right) - \left( x - T_i^{(\ell)} \right) \right)
    \right. \\
    &\left.
      + \erf_\gamma \left( x - T_i^{(\ell)} \right)
    \right] \indic{0 < x - T_i^{(\ell)} \le A},\\
  \end{align*}
  where \(\erf_\gamma : x \mapsto \gamma^{-1/2} \erf (\gamma^{1/2} x)\) and \(\erf\) is the Gauss error function.
  Moreover,
  \begin{align*}
    \int_0^T \int_0^T s_\ell \left(t, t' \right) \dd t \dd t'
    = \frac{\sqrt \pi}{2} \sum_{1 \le i, v \le N^{(\ell)}_T}
    &\left[
      2 G_\gamma \left( \min \left( T-T_v^{(\ell)}, A \right) \right)
    \right. \\
    &\left.
      - G_\gamma \left( \min \left( T-T_v^{(\ell)}, A \right) - \min \left( T-T_i^{(\ell)}, A \right) \right)
    \right],
  \end{align*}
  where \(G_\gamma  : x \in \R \mapsto x \erf_\gamma (x) + \frac{\gamma^{-1}}{\sqrt \pi} \left( \e^{- \gamma x^2} - 1 \right)\) is an antiderivative of \(\erf_\gamma\).
\end{remark}

If \(\varphi\) is differentiable, this also holds true for \(\varphi_1\), \(\varphi_2\) and \(F_M\),
and for all \(j \in \llb 1, d \rrb\) and all \(\theta \in \Theta^+_\parallel\), gradients read:
\begin{align*}
  \frac{\partial F_M}{\partial \mu_j} (\theta)
  = \frac{T}{M} \ind ^\top \varphi_1' \left( \mu_j \ind + Q^{(j)} \alpha^{(j)} + Bb^{(j)} \right)
  - \ind ^\top \varphi_2' \left( \mu_j \ind + K^{(j)} \alpha^{(j)} + E^{(j)} b^{(j)} \right),
\end{align*}
and
\begin{align*}
  \nabla_{\alpha^{(j)}} F_M (\theta)
  = \frac{T}{M} {Q^{(j)}} ^\top \varphi_1' \left( \mu_j \ind + Q^{(j)} \alpha^{(j)} + Bb^{(j)} \right)
  - {K^{(j)}} ^\top \varphi_2' \left( \mu_j \ind + K^{(j)} \alpha^{(j)} + E^{(j)} b^{(j)} \right)
  + \eta
  \begin{bmatrix}
    K^{(j 1)} \alpha^{(j 1)} \\ \vdots \\ K^{(j d)} \alpha^{(j d)}
  \end{bmatrix},
\end{align*}
and
\begin{align*}
  \nabla_{b^{(j)}} F_M (\theta)
  = \frac{T}{M} B ^\top \varphi_1' \left( \mu_j \ind + Q^{(j)} \alpha^{(j)} + Bb^{(j)} \right)
  - {E^{(j)}} ^\top \varphi_2' \left( \mu_j \ind + K^{(j)} \alpha^{(j)} + E^{(j)} b^{(j)} \right).
\end{align*}


	\section{Proofs of \cref{prop:approximation_mle,prop:approximation_ls}}
	  \label{app:approximation}
Throughout this section, we assume that the kernel assotiated to the \rkhs \(\H\) is
bounded:
\[
  \exists \kappa > 0 :
  \quad
  \forall x \in \R,
  k(x, x) \le \kappa^2,
\]
and
\(L_k\)-Lipschitz continuous:
\[
  \forall x \in \R:
  \quad
  \forall (y, y') \in \R^2,
  |k(x, y) - k(x, y')| \le L_k |y - y'|.
\]
We also assume that \(\varphi\) is
the ReLU function.

\begin{lemma}\label{lem:approximation_bound}
  Let \(\delta > 0\).
  Then,
  \[
    \forall x \in \R,
    \quad
    \left| \varphi(x) - \tilde \varphi(x) \right| \le \frac{\log 2}{\omega},
  \]
  and
  \[
    \forall x > \delta,
    \quad
    \left| (\log \circ \varphi)(x) - (\log \circ \tilde \varphi)(x) \right| \le \frac{\log 2}{\delta \omega}.
  \]
\end{lemma}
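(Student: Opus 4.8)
The plan is to exploit the fact that the softplus $\tilde\varphi$ is a uniform upper bound of the ReLU $\varphi$ with an explicitly computable gap. First I would establish that for every $x \in \R$,
\[
  \tilde\varphi(x) - \varphi(x) = \frac{1}{\omega}\log\left(1 + \e^{-\omega|x|}\right).
\]
This is checked by splitting on the sign of $x$: for $x \ge 0$ one has $\varphi(x) = x$ and $\frac{1}{\omega}\log(1 + \e^{\omega x}) - x = \frac{1}{\omega}\log(1 + \e^{-\omega x})$, while for $x < 0$ one has $\varphi(x) = 0$ and $\frac{1}{\omega}\log(1 + \e^{\omega x}) = \frac{1}{\omega}\log(1 + \e^{-\omega |x|})$. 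Since this expression is nonnegative and, viewed as a function of $|x|$, decreasing, it attains its maximum at $x = 0$, where it equals $\frac{\log 2}{\omega}$. Because $\tilde\varphi \ge \varphi$ everywhere, the absolute value coincides with the gap itself, so the first bound $\left|\varphi(x) - \tilde\varphi(x)\right| \le \frac{\log 2}{\omega}$ follows immediately.

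For the second bound, I would restrict to $x > \delta > 0$, so that $\varphi(x) = x > \delta > 0$ and $\log\varphi(x)$ is well defined, while $\tilde\varphi(x) > 0$ holds for all $x$. Monotonicity of $\log$ together with $\tilde\varphi \ge \varphi$ gives $\log\tilde\varphi(x) \ge \log\varphi(x)$, so the absolute value is again the difference itself, and I would write it as $\log\tilde\varphi(x) - \log\varphi(x) = \log\left(1 + \frac{\tilde\varphi(x) - \varphi(x)}{\varphi(x)}\right)$. Applying the elementary inequality $\log(1 + u) \le u$ for $u \ge 0$, then bounding the numerator by the first part of the lemma and the denominator from below by $\delta$, yields $\log\tilde\varphi(x) - \log\varphi(x) \le \frac{\tilde\varphi(x) - \varphi(x)}{\varphi(x)} \le \frac{\log 2 / \omega}{\delta} = \frac{\log 2}{\delta\omega}$, which is the claim.

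The manipulations here are elementary, so I do not anticipate a genuine analytic obstacle; the only points requiring care are the sign-case bookkeeping in the closed-form expression for the gap and, for the second part, ensuring that $\log\varphi(x)$ is legitimately defined. The latter is exactly why the hypothesis $x > \delta$ with $\delta > 0$ is imposed: the lower bound $\varphi(x) > \delta$ it provides is precisely what prevents the logarithmic error from blowing up near the origin, and it is what couples $\delta$ to the final $1/(\delta\omega)$ rate appearing in \cref{prop:approximation_mle}.
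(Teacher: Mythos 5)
Your proof is correct and follows essentially the same route as the paper's: a sign split giving the explicit gap $\tilde\varphi(x)-\varphi(x)=\frac{1}{\omega}\log(1+\e^{-\omega|x|})$ bounded by its value at $0$, and for the logarithmic part the identity $\log\tilde\varphi(x)-\log\varphi(x)=\log\bigl(1+\tfrac{\tilde\varphi(x)-\varphi(x)}{x}\bigr)$ combined with $\log(1+u)\le u$ and $x>\delta$. The only cosmetic difference is that you package the two sign cases into a single closed form in $|x|$ before maximizing, whereas the paper bounds each case directly.
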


\begin{proof}
  Let \(x \in \R\).
  If \(x < 0\), then
  \[
     \left| \varphi(x) - \tilde \varphi(x) \right|
     = \tilde \varphi(x)
     \le \tilde \varphi(0)
     = \frac{\log 2}{\omega}.
  \]
  If \(x \ge 0\), then
  \[
     \left| \varphi(x) - \tilde \varphi(x) \right|
     = \frac{\log(1 + \e^{-\omega x})}{\omega}
     \le \frac{\log 2}{\omega}.
  \]
  Let \(\delta > 0\) and \(x > \delta\).
  Then
  \[
    \left| (\log \circ \varphi)(x) - (\log \circ \tilde \varphi)(x) \right|
    = \log \left( \frac{\tilde \varphi (x)}{x}\right)
    = \log \left( 1 + \frac{\log(1 + \e^{-\omega x})}{\omega x} \right)
    \le \frac{\log(1 + \e^{-\omega x})}{\omega x}
    \le \frac{\log 2}{\delta \omega}.
  \]
\end{proof}

\begin{lemma}\label{lem:approximation_lip}
  Let \(\theta \in \Omega\).
  Then, for all \(j \in \llb 1, d \rrb\)
  and for every interval \(I\) without a discountinuity of \(\lambda^{(j)}\),
  \(\lambda^{(j)}\) is \(L_\lambda\)-Lipschitz continuous,
  where \(L_\lambda = L_k N_T C\).
\end{lemma}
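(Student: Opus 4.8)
The plan is to localize the estimate to an interval on which the active set of past events is frozen, reduce to the increment of the argument of $\varphi$, and then control each interaction term through the reproducing property. I first record where $\lambda^{(j)}$ can jump: in
\[
  \lambda^{(j)}(t) = \varphi\left(\mu_j + \sum_{\ell=1}^d \sum_{i=1}^{N_T^{(\ell)}}\left(h_{j\ell}(t - T_i^{(\ell)}) + b_{j\ell}\right)\indic{0 < t - T_i^{(\ell)} \le A}\right),
\]
the only source of discontinuity is the indicators, which jump solely at the points $T_i^{(\ell)}$ and $T_i^{(\ell)} + A$. Hence on any interval $I$ free of discontinuities of $\lambda^{(j)}$, the active set $S = \{(\ell,i) : 0 < t - T_i^{(\ell)} \le A\}$ is independent of $t \in I$, so that for $t, t' \in I$ the constants $\mu_j$ and $b_{j\ell}$ cancel in the increment. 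Since $\varphi$ (the ReLU) is $1$-Lipschitz, this yields
\[
  \left|\lambda^{(j)}(t) - \lambda^{(j)}(t')\right| \le \sum_{(\ell,i)\in S}\left|h_{j\ell}(t - T_i^{(\ell)}) - h_{j\ell}(t' - T_i^{(\ell)})\right|.
\]

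Next I would bound each term by the reproducing property and Cauchy–Schwarz,
\[
  \left|h_{j\ell}(t - T_i^{(\ell)}) - h_{j\ell}(t' - T_i^{(\ell)})\right| = \left|\iH{h_{j\ell}}{k(\cdot, t - T_i^{(\ell)}) - k(\cdot, t' - T_i^{(\ell)})}\right| \le \|h_{j\ell}\|_\H \left\|k(\cdot, t - T_i^{(\ell)}) - k(\cdot, t' - T_i^{(\ell)})\right\|_\H,
\]
and the key step is to show the feature increment is linear in the argument, namely $\|k(\cdot,y) - k(\cdot,y')\|_\H \le L_k|y - y'|$, so that each interaction function is $L_k\|h_{j\ell}\|_\H$-Lipschitz. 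To close, I would use that $S$ has at most $N_T = \sum_\ell N_T^{(\ell)}$ elements and that $\|h_{j\ell}\|_\H \le C$ for every pair (since $\|h_{j\ell}\|_\H^2 \le \sum_{j',\ell'}\|h_{j'\ell'}\|_\H^2 \le C^2$ when $\theta \in \Omega$), whence
\[
  \left|\lambda^{(j)}(t) - \lambda^{(j)}(t')\right| \le |S|\, L_k C\, |t - t'| \le L_k N_T C\, |t - t'|,
\]
which is the announced bound with $L_\lambda = L_k N_T C$.

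The main obstacle is the feature-increment estimate $\|k(\cdot,y) - k(\cdot,y')\|_\H \le L_k|y-y'|$. Expanding $\|k(\cdot,y) - k(\cdot,y')\|_\H^2 = k(y,y) - 2k(y,y') + k(y',y')$ and applying the Lipschitz hypothesis to each difference only delivers $\|k(\cdot,y)-k(\cdot,y')\|_\H \le \sqrt{2 L_k|y-y'|}$, i.e. a Hölder-$1/2$ control, which is insufficient. Upgrading this to genuinely Lipschitz (rather than Hölder) regularity of the elements of $\H$ requires smoothness of $k$ beyond mere pointwise Lipschitzness — typically a bounded mixed partial derivative $\partial_1\partial_2 k$, which permits $h'_{j\ell}(x) = \iH{h_{j\ell}}{\partial_2 k(\cdot, x)}$ and hence $|h'_{j\ell}(x)| \le \|h_{j\ell}\|_\H \sqrt{\partial_1\partial_2 k(x,x)}$. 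This is the step I would treat most carefully, since it is what actually converts the kernel regularity into the stated Lipschitz constant; everything else is the bookkeeping of the frozen active set and the norm budget of $\Omega$.
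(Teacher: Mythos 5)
Your argument is, step for step, the paper's own proof: on an interval free of discontinuities the set of active indicators is frozen, the ReLU is $1$-Lipschitz, the increment of each $h_{j\ell}$ is controlled by the reproducing property and Cauchy--Schwarz, and the conclusion follows from $|S|\le N_T$ together with $\|h_{j\ell}\|_\H\le C$ on $\Omega$. The one step you decline to take --- the feature-map estimate $\|k(\cdot,y)-k(\cdot,y')\|_\H\le L_k|y-y'|$ --- is exactly the inequality the paper asserts without justification in the corresponding line of its proof, and your objection is well founded. Expanding the squared norm,
\[
  \|k(\cdot,y)-k(\cdot,y')\|_\H^2
  = \bigl(k(y,y)-k(y,y')\bigr)+\bigl(k(y',y')-k(y',y)\bigr)
  \le 2L_k\,|y-y'|,
\]
so the stated pointwise Lipschitz hypothesis on $k$ only yields a H\"older-$1/2$ modulus for the canonical feature map, which is not enough to make $\lambda^{(j)}$ Lipschitz. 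Your proposed remedy (a bounded mixed derivative $\partial_1\partial_2 k$, equivalently a Lipschitz feature map $y\mapsto k(\cdot,y)$) is the standard one and does hold for the Gaussian kernel used throughout the paper: there $\|k(\cdot,y)-k(\cdot,y')\|_\H^2=2\left(1-\e^{-\gamma(y-y')^2}\right)\le 2\gamma(y-y')^2$, so the feature map is $\sqrt{2\gamma}$-Lipschitz --- note that this constant exceeds the value $L_k=\sqrt{\gamma}$ quoted for the Gaussian kernel, so even in that case the inequality as written fails for $y$ close to $y'$ and the constant in $L_\lambda$ must be adjusted. In short, the gap you identify is in the paper's argument rather than in yours: the lemma is correct provided the Lipschitz assumption on $k$ is read as (or strengthened to) a Lipschitz assumption on the feature map, and the rest of your bookkeeping matches the intended proof exactly.
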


\begin{proof}
  Let \(\theta \in \Theta^+\) and \(j \in \llb 1, d \rrb\),
  and \(I\) an interval without a jump time.
  For all \((t, t') \in I^2\),
  \begin{align*}
    |\lambda^{(j)}(t) - \lambda^{(j)}(t')|
    &= \left| \varphi \left( \mu_j + \sum_{\ell=1}^d \sum_{i = 1}^{N^{(\ell)}_t} g_{j \ell} \left(t - T_i^{(\ell)} \right) \right) - \varphi \left( \mu_j + \sum_{\ell=1}^d \sum_{i = 1}^{N^{(\ell)}_t} g_{j \ell} \left(t' - T_i^{(\ell)} \right) \right) \right| \\
    &\le \left| \sum_{\ell=1}^d \sum_{i = 1}^{N^{(\ell)}_t} g_{j \ell} \left(t - T_i^{(\ell)} \right) - \sum_{\ell=1}^d \sum_{i = 1}^{N^{(\ell)}_t} g_{j \ell} \left(t' - T_i^{(\ell)} \right)\right| \\
    &\le \sum_{\ell=1}^d \sum_{i = 1}^{N^{(\ell)}_t} \left| \left( h_{j \ell} \left(t - T_i^{(\ell)} \right) + b_{j \ell} \right) \indic{0 < t - T_i^{(\ell)} \le A} - \left( h_{j \ell} \left(t' - T_i^{(\ell)} \right) + b_{j \ell} \right) \indic{0 < t' - T_i^{(\ell)} \le A} \right| \\
    &= \sum_{\ell=1}^d \sum_{i = 1}^{N^{(\ell)}_t} \left| h_{j \ell} \left(t - T_i^{(\ell)} \right) - h_{j \ell} \left(t' - T_i^{(\ell)} \right) \right| \indic{0 < t - T_i^{(\ell)} \le A} \\
    &\le \sum_{\ell=1}^d \sum_{i = 1}^{N^{(\ell)}_t} \|h_{j \ell}\|_\H \left\|k \left(\cdot, t - T_i^{(\ell)} \right) - k \left(\cdot, t' - T_i^{(\ell)} \right) \right\|_\H \indic{0 < t - T_i^{(\ell)} \le A} \\
    &\le \sum_{\ell=1}^d \sum_{i = 1}^{N^{(\ell)}_t} L_k \|h_{j \ell}\|_\H |t - t'| \indic{0 < t - T_i^{(\ell)} \le A} \\
    &\le L_k \sum_{\ell=1}^d N^{(\ell)}_T \|h_{j \ell}\|_\H |t - t'|.
  \end{align*}
  Thus, \(\lambda^{(j)}\) is \(\left( L_k \sum_{\ell=1}^d N^{(\ell)}_T \|h_{j \ell}\|_\H \right)\)-Lipschitz continuous.
  Now, for \(\theta \in \Omega\),
  \begin{align*}
    L_k \sum_{\ell=1}^d N^{(\ell)}_T \|h_{j \ell}\|_\H
    \le L_k N_T C
    = L_\lambda.
  \end{align*}
\end{proof}

\begin{lemma}\label{lem:approximation_int}
  Let \(\theta \in \Omega\), \(j \in \llb 1, d \rrb\)
  and \(I\) an interval (of length \(|I|\)) with \(S_I\) discountinuities of \(\lambda^{(j)}\).
  Then, for any \(\tau \in I\),
  \[
    \int_I \left| \lambda^{(j)}(t) - \lambda^{(j)}(\tau) \right| \dd t
    \le \frac{L_\lambda |I|^2}{2} + |I| S_I G,
  \]
  where \(L_\lambda = L_k N_T C\)
  and \(G = \kappa C + B\).
\end{lemma}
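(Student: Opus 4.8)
The plan is to split the oscillation of $\lambda^{(j)}$ over $I$ into a part coming from its Lipschitz behaviour between jumps and a part coming from the $S_I$ jumps themselves. First I would bound the size of a single jump. A discontinuity of $\lambda^{(j)}$ occurs when, for some pair $(\ell, i)$, the quantity $t - T_i^{(\ell)}$ crosses either $0$ or $A$, so that the term $g_{j \ell}(t - T_i^{(\ell)}) = (h_{j \ell}(t - T_i^{(\ell)}) + b_{j \ell}) \indic{0 < t - T_i^{(\ell)} \le A}$ enters or leaves the sum inside $\varphi$. Since $\varphi$ (the ReLU) is $1$-Lipschitz, the induced jump of $\lambda^{(j)}$ is at most the magnitude of that single term. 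Using the reproducing property, Cauchy--Schwarz and the kernel bound, $|h_{j \ell}(x)| = |\iH{h_{j \ell}}{k(\cdot, x)}| \le \|h_{j \ell}\|_\H \sqrt{k(x, x)} \le \kappa \|h_{j \ell}\|_\H$, so each jump is at most $\kappa \|h_{j \ell}\|_\H + |b_{j \ell}| \le \kappa C + B = G$ for $\theta \in \Omega$ (using $\|h_{j \ell}\|_\H \le C$ and $|b_{j \ell}| \le B$).

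Next, for $t, \tau \in I$ with $\tau \le t$, I would telescope across the continuity pieces cut out by the discontinuities lying in $(\tau, t)$. On each such piece $\lambda^{(j)}$ is $L_\lambda$-Lipschitz by \cref{lem:approximation_lip}, so summing the increments over the pieces together with the jumps between them and applying the triangle inequality yields
\[
  \left| \lambda^{(j)}(t) - \lambda^{(j)}(\tau) \right| \le L_\lambda |t - \tau| + S_I G,
\]
because the lengths of the continuity pieces add up to $|t - \tau|$ and there are at most $S_I$ jumps, each of size at most $G$.

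Finally I would integrate this pointwise bound over $I$. The jump contribution is immediate, $\int_I S_I G \dd t = |I| S_I G$. For the Lipschitz contribution, writing $I = [a, b]$ with $b - a = |I|$, an elementary computation gives $\int_I |t - \tau| \dd t = \tfrac{(\tau - a)^2 + (b - \tau)^2}{2} \le \tfrac{|I|^2}{2}$ for every $\tau \in I$, since $p^2 + q^2 \le (p + q)^2$ for $p, q \ge 0$ with $p = \tau - a$ and $q = b - \tau$. Multiplying by $L_\lambda$ and adding the two contributions produces the claimed bound $\tfrac{L_\lambda |I|^2}{2} + |I| S_I G$.

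I expect the main obstacle to be the first step: making the jump-size argument clean and correct, in particular justifying that at each discontinuity only a single bounded term (of magnitude at most $G$) is gained or lost. This requires care with the left-continuity convention at event times, with the distinction between $t - T_i^{(\ell)}$ crossing $0$ versus crossing $A$, and with the possibility of simultaneous events; the subsequent telescoping estimate and the final elementary integral are routine.
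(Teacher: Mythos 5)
Your proposal is correct and follows essentially the same route as the paper: bound each jump by $G = \kappa C + B$ via the reproducing property and the kernel bound, combine with the Lipschitz estimate of \cref{lem:approximation_lip} to get the pointwise bound $L_\lambda|t-\tau| + S_I G$, and integrate. Your treatment of the final integral ($\int_I |t-\tau|\dd t \le |I|^2/2$ for any $\tau \in I$) and of the $1$-Lipschitzness of the ReLU is in fact slightly more explicit than the paper's, which states the same bounds more tersely.
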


\begin{proof}
  Let \(\theta \in \Omega\).
  Since discountinuities of \(\lambda^{(j)}\) come from discountinuities of the interactions functions \(g_{j \ell}\) (\(\ell \in \llb 1, d \rrb\)) at the boundaries of their support \([0, A]\) (the functions \(h_{j \ell}\), \(\ell \in \llb 1, d \rrb\), are continuous by Lipschitz-continuity of the kernel \(k\)),
  their amplitudes are bounded by
  \[
    \max_{1 \le \ell \le d} \max(|g_{j \ell}(0), |g_{j \ell}(A)|)
    \le \max_{1 \le \ell \le d} \|g_{j \ell}\|_\infty.
  \]
  But for all \(\ell \in \llb 1, d \rrb\) and \(x \in \R\),
  \begin{align*}
    |g_{j \ell}(x)|
    \le |\iH{h_{j \ell}}{k(\cdot, x)}| + |b_{j \ell}|
    \le \|h_{j \ell}\|_\H \sqrt{k(x, x)} + |b_{j \ell}|
    \le \kappa C + B
    = G.
  \end{align*}
  So discountinuities have jumps bounded by \(G\).
  
  Let \(\tau \in I\).
  Then, for all \(t \in I\), by \cref{lem:approximation_lip}:
  \begin{align*}
    \left| \lambda^{(j)}(t) - \lambda^{(j)}(\tau) \right| \le L_\lambda |t - \tau| + S_I G.
  \end{align*}
  By integration, it comes
  \[
    \int_I \left| \lambda^{(j)}(t) - \lambda^{(j)}(\tau) \right| \dd t
    \le L_\lambda \int_0^{|I|} t \dd t + |I| S_I G
    = \frac{L_\lambda|I|^2}{2} + |I| S_I G.
  \]
\end{proof}

\subsubsection*{Proof of \cref{prop:approximation_mle}}
  Denoting, for all \(j \in \llb 1, d \rrb\), \(\tilde \lambda^{(j)} (t) = \tilde \varphi \left( \mu_j + \sum_{\ell=1}^d \sum_{i = 1}^{N^{(\ell)}_T} \left( h_{j \ell} \left(t - T_i^{(\ell)} \right) + b_{j \ell} \right) \indic{0 < t - T_i^{(\ell)} \le A} \right)\), we have:
  \begin{align*}
    0
    &\le L(\hat \theta) - L(\bar \theta) \\
    &= L(\hat \theta) - L_{M, \omega}(\hat \theta) + L_{M, \omega}(\hat \theta) - L_{M, \omega}(\bar \theta) + L_{M, \omega}(\bar \theta) - L(\bar \theta) \\
    &\le 2 \max_{\theta \in \{\hat \theta, \bar \theta\}} \left| L(\theta) - L_{M, \omega}(\theta) \right| \\
    &\le 2 \max_{\theta \in \{\hat \theta, \bar \theta\}} \sum_{j=1}^d 
    \left|
      \left[ \int_0^T \lambda^{(j)}(t) \dd t - \frac{T}{M} \sum_{n=1}^M \tilde \lambda^{(j)}(\tau_n) \right]
      - \left[ \sum_{n=1}^{N^{(j)}_T} \log \left( \lambda^{(j)} \left( T_n^{(j)} \right) \right) - \sum_{n=1}^{N^{(j)}_T} \log \left( \tilde \lambda^{(j)} \left( T_n^{(j)} \right) \right) \right]
    \right|.
  \end{align*}
  Let \(\theta\) be either \(\hat \theta\) or \(\bar \theta\).
  Regarding the first term,
  \begin{align*}
    \left| \int_0^T \lambda^{(j)}(t) \dd t - \frac{T}{M} \sum_{n=1}^M \tilde \lambda^{(j)}(\tau_n) \right|
    &= \left| \sum_{n=1}^M \int_{\tau_n}^{\tau_n + \frac TM} \left( \lambda^{(j)}(t) - \tilde \lambda^{(j)}(\tau_n) \right) \dd t \right| \\
    &= \left| \sum_{n=1}^M \left(
    \int_{\tau_n}^{\tau_n + \frac TM} \left( \lambda^{(j)}(t) - \lambda^{(j)}(\tau_n) \right) \dd t
    + \frac TM \left( \lambda^{(j)}(\tau_n) - \tilde \lambda^{(j)}(\tau_n) \right) \right) \right| \\
    &\le \sum_{n=1}^M \left| \int_{\tau_n}^{\tau_n + \frac TM} \left( \lambda^{(j)}(t) - \lambda^{(j)}(\tau_n) \right) \dd t \right|
    + \frac{T \log 2}{\omega},
  \end{align*}
  by \cref{lem:approximation_bound}.
  Now, by \cref{lem:approximation_int}, denoting \(L_\lambda = L_k N_T C\) and \(G = \kappa C + B\):
  \begin{align*}
    \left| \int_0^T \lambda^{(j)}(t) \dd t - \frac{T}{M} \sum_{n=1}^M \tilde \lambda^{(j)}(\tau_n) \right|
    &\le \sum_{n=1}^M \left( \frac{L_\lambda T^2}{2 M^2} + \frac{T}{M} S_{[\tau_n, \tau_n + \frac TM)} G \right)
    + \frac{T \log 2}{\omega} \\
    &= \frac{L_\lambda T^2}{2 M} + \frac{T}{M} S_{[0, T)} G + \frac{T \log 2}{\omega} \\
    &\le \frac{L_\lambda T^2}{2 M} + \frac{2 N_T G T}{M} + \frac{T \log 2}{\omega},
  \end{align*}
  since \(\lambda^{(j)}\) has at most \(2 N_T\) discountinuities on \([0, T]\) (since the functions \(h_{j \ell}\), \(\ell \in \llb 1, d \rrb\), are continuous --by Lipschitz-continuity of the kernel \(k\)--, discountinuities of \(\lambda^{(j)}\) come from discountinuities of the interactions functions \(g_{j \ell}\), \(\ell \in \llb 1, d \rrb\),  at the boundaries of their support \([0, A]\)).
  
  In addition, regarding the second term, since \(L(\theta) < \infty\) (by assumption \(L(\hat \theta) < \infty\) and \(L(\bar \theta) < \infty\), since \(F\) is proper), there exists \(\delta > 0\) such that
  \[
    \min_{\theta \in \{\hat \theta, \bar \theta\},\, j \in \llb 1, d \rrb, \, n \in \llb 1, N_T^{(j)} \rrb} \left\{ \mu_j + \sum_{\ell=1}^d \sum_{i = 1}^{N^{(\ell)}_T} \left( h_{j \ell} \left( T_n^{(j)} - T_i^{(\ell)} \right) + b_{j \ell} \right) \indic{0 < t - T_i^{(\ell)} \le A} \right\} > \delta.
  \]
  It results that:
  \begin{align*}
   & \left| \sum_{n=1}^{N^{(j)}_T} \log \left( \lambda^{(j)} \left( T_n^{(j)} \right) \right) - \sum_{n=1}^{N^{(j)}_T} \log \left( \tilde \lambda^{(j)} \left( T_n^{(j)} \right) \right) \right| \\
    &\le \sum_{n=1}^{N^{(j)}_T} \left| (\log \circ \varphi) \left( \mu_j + \sum_{\ell=1}^d \sum_{i = 1}^{N^{(\ell)}_T} \left( h_{j \ell} \left( T_n^{(j)} - T_i^{(\ell)} \right) + b_{j \ell} \right) \indic{0 < T_n^{(j)} - T_i^{(\ell)} \le A} \right) \right. \\
    &- \left. (\log \circ \tilde \varphi) \left( \mu_j + \sum_{\ell=1}^d \sum_{i = 1}^{N^{(\ell)}_T} \left( h_{j \ell} \left( T_n^{(j)} - T_i^{(\ell)} \right) + b_{j \ell} \right) \indic{0 < T_n^{(j)} - T_i^{(\ell)} \le A} \right)\right| \\
    &\le N^{(j)}_T \frac{\log 2}{\delta \omega},
  \end{align*}
  by \cref{lem:approximation_bound}.
  
  Combining both bounds:
  \begin{align*}
    L(\hat \theta) - L(\bar \theta)
    &\le 2 \sum_{j=1}^d
    \left\{
      \frac{L_\lambda T^2}{2 M} + \frac{2 N_T G T}{M} + \frac{T \log 2}{\omega} + N^{(j)}_T \frac{\log 2}{\delta \omega}
    \right\} \\
    &= \frac{T}{M} \left( d L_\lambda T + 4 d N_T G \right)
    + \frac{2 \log 2}{\omega} \left( d T + \frac{N_T}{\delta} \right) \\
    &= \frac{T}{M} \left( L_k C d T N_T + 4 (\kappa C + B) d N_T \right)
    + \frac{2 \log 2}{\omega} \left( d T + \frac{N_T}{\delta} \right).
  \end{align*}

\subsubsection*{Proof of \cref{prop:approximation_ls}}
  Let \(j \in \llb 1, d \rrb\) and \(\theta\) be either \(\hat \theta\) or \(\bar \theta\).
  Then, for all \(\omega \ge 1\) and \(t \in [0, T]\),
  \begin{align*}
    0
    \le \lambda^{(j)}(t)
    \le \tilde \lambda^{(j)}(t)
    \le \lambda^{(j)}(t) + \frac{\log 2}{\omega}
    &= \varphi \left( \mu_j + \sum_{\ell=1}^d \sum_{i = 1}^{N^{(\ell)}_T} g_{j \ell} \left(t - T_i^{(\ell)} \right) \right) + \frac{\log 2}{\omega} \\
    &\le \mu_j + \sum_{\ell=1}^d \sum_{i = 1}^{N^{(\ell)}_T} \| g_{j \ell} \|_\infty + \frac{\log 2}{\omega} \\
    &\le B + N_T G + \frac{\log 2}{\omega} \\
    &\le \frac{H}{2},
  \end{align*}
  with \(G = \kappa C + B\) and \(H = 2 ( B + N_T G + \log 2)\).
  Thus, both \(\lambda^{(j)}\) and \(\tilde \lambda^{(j)}\) have values in \([0, H/2]\), interval on which the square function is \(H\)-Lipschitz continuous.
  
  Now, following the proof of \cref{prop:approximation_mle}, we have:
  \begin{align*}
    0
    &\le J(\hat \theta) - J(\bar \theta) \\
    &= J(\hat \theta) - J_{M, \omega}(\hat \theta) + J_{M, \omega}(\hat \theta) - J_{M, \omega}(\bar \theta) + J_{M, \omega}(\bar \theta) - J(\bar \theta) \\
    &\le 2 \max_{\theta \in \{\hat \theta, \bar \theta\}} \left| J(\theta) - J_{M, \omega}(\theta) \right| \\
    &\le 2 \max_{\theta \in \{\hat \theta, \bar \theta\}} \sum_{j=1}^d 
    \left|
      \left[ \int_0^T \lambda^{(j)}(t)^2 \dd t - \frac{T}{M} \sum_{n=1}^M \tilde \lambda^{(j)}(\tau_n)^2 \right]
      - 2\left[ \sum_{n=1}^{N^{(j)}_T} \lambda^{(j)} \left( T_n^{(j)} \right) - \sum_{n=1}^{N^{(j)}_T} \tilde \lambda^{(j)} \left( T_n^{(j)} \right) \right]
    \right|.
  \end{align*}
  Let \(\theta\) be either \(\hat \theta\) or \(\bar \theta\).
  Regarding the first term,
  \begin{align*}
    \left| \int_0^T \lambda^{(j)}(t)^2 \dd t - \frac{T}{M} \sum_{n=1}^M \tilde \lambda^{(j)}(\tau_n)^2 \right|
    &= \left| \sum_{n=1}^M \int_{\tau_n}^{\tau_n + \frac TM} \left( \lambda^{(j)}(t)^2 - \tilde \lambda^{(j)}(\tau_n)^2 \right) \dd t \right| \\
    &\le H \sum_{n=1}^M \int_{\tau_n}^{\tau_n + \frac TM} \left| \lambda^{(j)}(t) - \tilde \lambda^{(j)}(\tau_n) \right| \dd t \\
    &\le H \left( \frac{L_\lambda T^2}{2 M} + \frac{2 N_T G T}{M} + \frac{T \log 2}{\omega} \right),
  \end{align*}
  where \(L_\lambda = L_k N_T C\),
  by a derivation similar to previously.

  In addition, regarding the second term, we immediately have, by \cref{lem:approximation_bound}, that:
  \begin{align*}
   \left| \sum_{n=1}^{N^{(j)}_T} \lambda^{(j)} \left( T_n^{(j)} \right) - \sum_{n=1}^{N^{(j)}_T} \tilde \lambda^{(j)} \left( T_n^{(j)} \right) \right|
    &\le N^{(j)}_T \frac{\log 2}{\omega}.
  \end{align*}
  
  Combining both bounds:
  \begin{align*}
    J(\hat \theta) - J(\bar \theta)
    &\le \frac{H T}{M} \left( L T + 4 N_T d G \right)
    + \frac{2 \log 2}{\omega} \left( H d T + N_T \right).\\
    &= \frac{HT}{M} \left( L_k C d T N_T + 4 (\kappa C + B) d N_T \right)
    + \frac{4 \log 2}{\omega} \left( H d T + N_T \right).
  \end{align*}


  \section{Additional numerical results}
    \label{app:numerics}
\subsection{Synthetic data}
  \cref{img:toy_time} depicts the learning time (average single training time) on a personal computer of the several methods compared on the synthetic data.
  The methods \textbf{Gaussian} and \textbf{Bernstein} are very fast to train because they have only \(d(1 + d U)\) parameters (with \(U=10\) here),
  while \textbf{RKHS} has \(d(1 + N_T + 2d)\) parameters.
  That being said, it is interesting to observe that \textbf{RKHS} is only two times slower than \textbf{Exponential},
  which is a model with \(d (2 + d)\) parameters but that necessitates a specific treatment to compute exactly the compensator
  and that involves a non-convex objective function.
  Let us remark that, on the considered dataset, the most accurate approach is the proposed \textbf{RKHS},
  the training time being the price to pay for the accuracy.
  
  \begin{figure}[ht]
    \center
    \includegraphics[width=.5\textwidth]{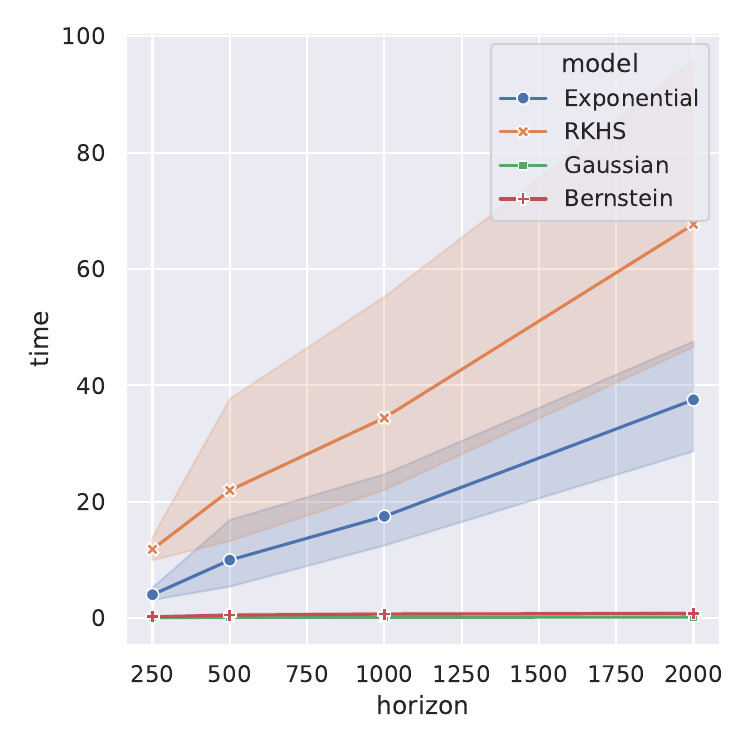}
    \caption{Learning time (in seconds) with respect to the horizon $T$.}
    \label{img:toy_time}
  \end{figure}

\subsection{Neuronal data}
  \cref{img:neuron_all} presents all estimated interaction functions for the several methods considered.
  The proposed \textbf{RKHS}, as well as \textbf{Bernstein} \citep{lemonnier_nonparametric_2014}, recover complex auto-interactions, including the refractory period.
  In return, cross-interactions are estimated close to 0,
  except for minor inhibiting effects by kernels \(1 \gets 3\), \(2 \gets 1\), \(5 \gets 1\), 
  and exciting effects by kernels \(2 \gets 4\), \(3 \gets 4\), \(4 \gets 1\), \(4 \gets 3\).

  \begin{figure*}[ht]
    \center
    \includegraphics[width=\textwidth]{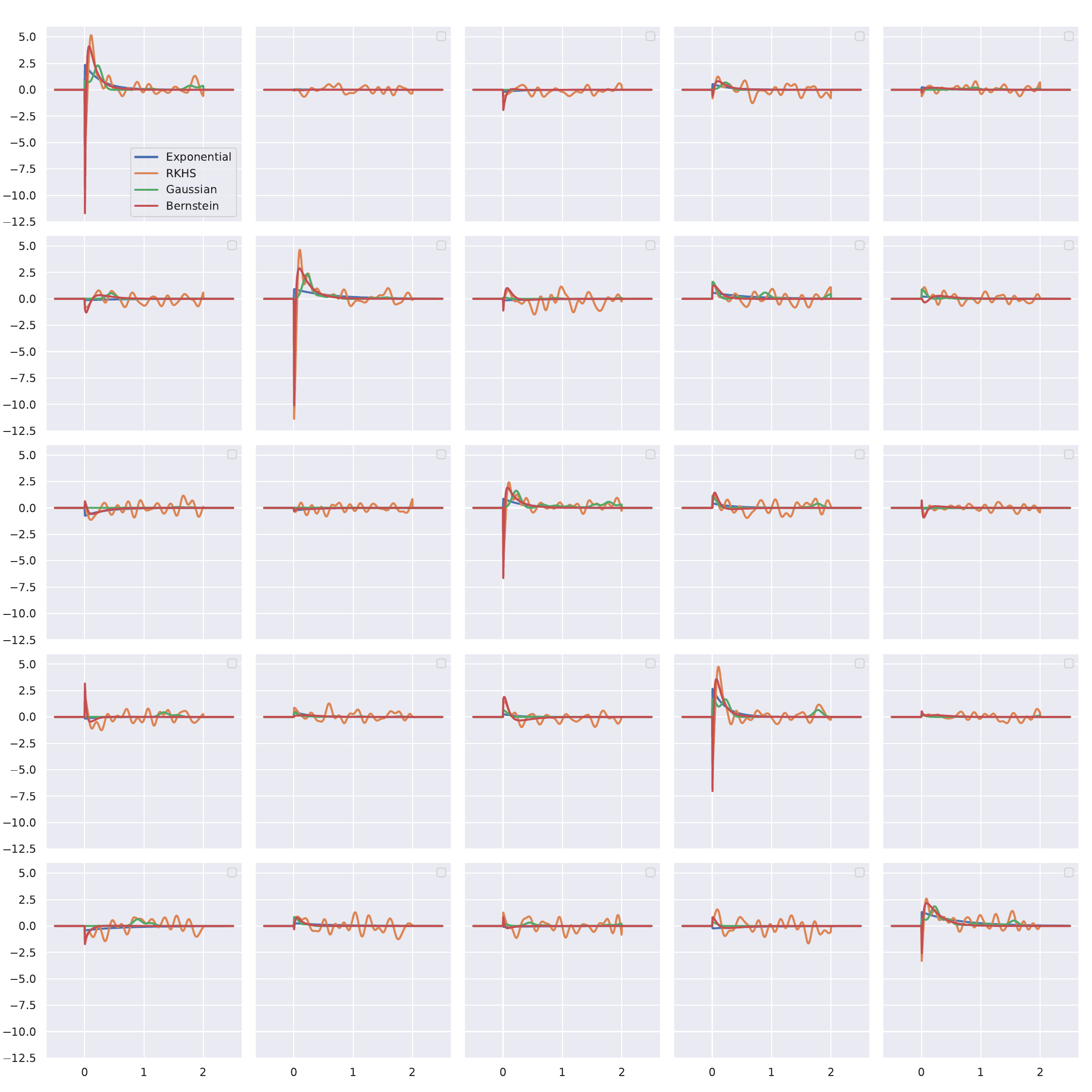}
    \caption{Interaction functions learned on the neuronal subnetwork.}
    \label{img:neuron_all}
  \end{figure*}
  
  \begin{table}[ht]
    \caption{Log-likelihood scores on a test trajectory (the higher, the better).}
    \label{tab:test_loglik}
    \begin{center}
      \begin{tabular}{lc}
        \textsc{Model} & \textsc{Log-Likelihood} \\
        \hline \\
        Exponential & 2152 \\
        RKHS & 2485 \\
        Gaussian & 2178 \\
        Bernstein & 2334 \\
      \end{tabular}
    \end{center}
  \end{table}
  
  As a quantitative assessment,
  \cref{tab:test_loglik} gives the log-likelihood scores computed on a test trajectory obtained by a randomized concatenation of half of the recordings.
  In agreement with \cref{img:neuron_all}, \textbf{RKHS} and \textbf{Bernstein} seem to fit the underlying process better than \textbf{Exponential} and \textbf{Gaussian}.
  Moreover, the highest score is obtained by \textbf{RKHS}, leading to believe that it is more suited than competitors.


\end{document}